\definecolor{darkblue}{RGB}{0,90,180}
\definecolor{maroon}{RGB}{255,0,255}
\definecolor{white}{RGB}{255,255,255}
\definecolor{blue1}{RGB}{230,245,255}
\definecolor{blue2}{RGB}{55,176,250}
\definecolor{mcmc}{RGB}{231, 231, 231}
\newtheorem{definition}{Definition}
\newtheorem{assumption}{Assumption}
\newtheorem{thm}{Theorem}
\newtheorem{lemma}{Lemma}
\newtheorem{proposition}{Proposition}
\newcommand{\norm}[1]{\left\lVert #1 \right\rVert}
\DeclarePairedDelimiter{\abs}{\lvert}{\rvert}
\DeclareMathOperator{\argmin}{argmin}
\DeclareMathOperator{\Tr}{Tr}
\title{Safe Linear Leveling Bandits}
\author{%
  Ilker Demirel$\textsuperscript{*}$~~~~~~~~Mehmet Ufuk Ozdemir$\textsuperscript{*}$~~~~~~~~Cem Tekin \\
  \{ilkerd@ee, ufuk.ozdemir@ug, cemtekin@ee\}.bilkent.edu.tr \\
  Department of Electrical and Electronics Engineering \\
  Bilkent University, Ankara, Turkey \\
  $\textsuperscript{*}$Authors contributed equally
}
\begin{document}

\maketitle
\begin{abstract}
Multi-armed bandits (MAB) are extensively studied in various settings where the objective is to \textit{maximize} the actions' outcomes (i.e., rewards) over time. Since safety is crucial in many real-world problems, safe versions of MAB algorithms have also garnered considerable interest. In this work, we tackle a different critical task through the lens of \textit{linear stochastic bandits}, where the aim is to keep the actions' outcomes close to a target level while respecting a \textit{two-sided} safety constraint, which we call \textit{leveling}. Such a task is prevalent in numerous domains. Many healthcare problems, for instance, require keeping a physiological variable in a range and preferably close to a target level. The radical change in our objective necessitates a new acquisition strategy, which is at the heart of a MAB algorithm. We propose SALE-LTS: Safe Leveling via Linear Thompson Sampling algorithm, with a novel acquisition strategy to accommodate our task and show that it achieves sublinear regret with the same time and dimension dependence as previous works on the classical reward maximization problem absent any safety constraint. We demonstrate and discuss our algorithm's empirical performance in detail via thorough experiments.
\end{abstract}
\section{Introduction} \label{sec:intro}
The multi-armed bandit (MAB) is a reinforcement learning model where the trade-off between exploration and exploitation can be elegantly analyzed \cite{sutton1998introduction}. The MAB agent interacts with its environment through a set of arms (i.e.,\ actions). The agent plays an arm at each round and observes a noisy outcome called the instantaneous reward.  MAB is extensively studied under different model assumptions, where the aim is to maximize the actions' outcomes over time (i.e.,\ cumulative reward). Two prominent lines of algorithms have been proposed to tackle the exploration-exploitation dilemma in the MAB problem: upper-confidence bound (UCB) based and Thompson sampling (TS) based. UCB-based algorithms leverage the \textit{optimism in the face of uncertainty} (OFU) principle to form optimistic estimates of arm outcomes and play the arm that looks best under these estimates \cite{auer2002finite, abbasi2011improved, bubeck2012regret}. TS-based algorithms maintain a posterior distribution over the unknown parameters and rely on sampling random estimates to induce exploration \cite{thompson1933likelihood, russo2014learning, agrawal2012analysis}. \citet{abeille2017linear} proposes a TS-based algorithm for the linear stochastic bandit problem that leverages the OFU principle by randomizing the parameter estimates via an \textit{optimistic enough} distribution.

As many real-world problems demand strict safety guarantees during exploration, the development of safe exploration algorithms has received significant attention. A motivating example is clinical trials where the aim is to identify the best therapy while ensuring patient safety \cite{petroni2017implementation, villar2015multi}. \citet{amani2019linear} studies a safe variant of the UCB1 algorithm in \citet{auer2002finite} (safe linear UCB) preceded by a pure exploration phase. \citet{khezeli2020safe} proposes a safe algorithm for the linear bandit setting, which employs a pure exploration phase at the beginning followed by a greedy exploitation stage. \citet{moradipari2021safe} adapts the linear TS algorithm in \citet{abeille2017linear} to a safety constrained setting via some adjustments on the randomization procedure to leverage the OFU principle in the presence of safety constraints. \citet{usmanova2019safe} studies a safe variant of the Frank-Wolfe algorithm to solve a smooth optimization problem with unknown linear safety constraints and provides convergence guarantees but does not derive an upper bound on the regret.  \citet{sui2015safe,sui2018stagewise} consider safety for a general class of \textit{smooth} reward functions via Gaussian processes but do not provide formal upper bounds on the cumulative regret. Moreover, they model the safe set expansion as a proxy objective which leads to unnecessary suboptimal evaluations at the safe set boundaries. \citet{turchetta2019safe} proposes ``GoOSE'', a plug-in safe exploration algorithm, which consecutively queries new actions and observes the outcomes until the originally selected action is classified as safe or unsafe. This strategy is not applicable in a contextual setup since such ``re-observations'' may not be possible for a given context.

All of the works above consider the classical MAB setting where the objective is to maximize the actions' outcomes and consider a one-sided safety constraint. In this work, we study a different problem where the objective is not to maximize the actions' outcomes but to keep them close to a target level, which we call \textit{leveling}, while respecting a \textit{two-sided} safety constraint at each round. This problem structure is relevant and essential in many domains, particularly healthcare. One such example is blood pressure disorders, where the deviation from the standard value causes hypo-/hyper-tension events. Patient characteristics (i.e.,\ context) play a critical role in determining the blood pressure response to the therapeutic agent, and they should be taken into account during the dosing process \cite{materson1993single, nerenberg2018hypertension}. We formalize this problem from a MAB perspective via a suitable definition of the \textit{regret} as a proxy performance metric to capture our objective in \cref{sec:probform}. We consider a linear contextual stochastic bandit setting where the outcomes are a linear function of the actions \cite{agrawal2013thompson, dani2008stochastic, rusmevichientong2010linearly}. A closely related work using a modification of the LinUCB in \citet{chu2011contextual}, which does not consider any safety constraint, is \textit{``target tracking''} where the aim is to achieve closest outcomes to a moving target \cite{bregere2019target}. Our contributions are as follows.
\begin{tcolorbox}[colback=mcmc,colframe=mcmc]
\textbf{Key Contributions}
\begin{itemize}[leftmargin=*]
\item We introduce the \textit{leveling bandit} problem, along with a natural two-sided safety constraint in \cref{sec:probform}, which is relevant numerous domains and healthcare in particular. 
\item We propose SALE-LTS: Safe Leveling via Linear Thompson Sampling algorithm, which employs a modified acquisition strategy to accommodate the problem structure accompanied by a safe exploration scheme in \cref{sec:algo}.
\item We show that SALE-LTS incurs a cumulative regret of order $\mathcal{O} (T^{1/2} \log^{3/2}T \cdot d^{3/2} \log^{1/2} d)$, matching the LTS algorithm in \citet{abeille2017linear} while respecting the safety constraint in \cref{sec:regret}.
\end{itemize} 
\end{tcolorbox}
Finally, we make comprehensive \textit{in silico} experiments on type 1 diabetes mellitus (T1DM) disease in \cref{sec:exp}. T1DM patients regularly administrate \textit{bolus insulin} doses before meals since they can not regulate their blood glucose levels due to the pancreatic $\beta$-cell loss, and the adverse effects may lead to hospitalization and death \cite{bastaki2005diabetes}. We safely optimize the \textit{individualized} insulin dose recommendation models and compare our performance against the standard dose calculators.
\section{Problem Formulation} \label{sec:probform}
\paragraph{Notation}
We denote by $\diamond$ the vector appending operation, i.e., $a \diamond b = [a_1, \ldots, b_1, \ldots]$. $\norm{x}_2$ denotes the Euclidean norm of vector $x$, and $\lVert x \rVert_V$ denotes its weighted $\ell_2$-norm with respect to the positive semi-definite matrix $V$, i.e., $\lVert x \rVert_V = \sqrt{x^T V x}$. $[T]$ denotes the set $\{1,\ldots,T\}$, $\mathbb{I}$ is the indicator function such that $\mathbb{I} \{ A \} = 1$ when $A$ holds and 0 otherwise, and $\neg$ denotes the set complement operator. Also, throughout the paper Loewner order is used to compare two $n$ dimensional positive semi-definite matrices and denoted as $A \leq B$ if $ B - A$ is positive semi-definite. That is, $A \leq B$ implies that $x^T A x \leq x^T B x, \, \forall x \in \mathbb{R}^n$.  

Let $z_t \in \mathcal{Z}$ denote the context, and $a_t \in \mathcal{A}$ the action taken by the \textit{learner} upon observing $z_t$ at round $t \in [T]$. Both $\mathcal{Z} \subseteq \mathbb{R}^{d_z}$ and $\mathcal{A} \subseteq \mathbb{R}^{d_a}$ are compact and convex where $d_z$ and $d_a$ denote the dimensions of the corresponding spaces. Together, $z_t$ and $a_t$ form the \textit{pseudo-action} $x_t \in \mathcal{X}$, where $x_t \coloneqq z_t \diamond a_t$, $\mathcal{X} \coloneqq\mathcal{Z} \times \mathcal{A}$, and we denote the cardinality of $x \in \mathcal{X}$ by $d \coloneqq d_z + d_a$. For later convention, let us also denote by $\mathcal{X}_t \coloneqq \{ z_t \diamond a: a \in \mathcal{A} \}$.
\paragraph{Observations} At each round $t \in [T]$, the learner observes the context $z_t$ and plays an action $a_t$ accordingly resulting in the pseudo action $x_t = z_t \diamond a_t$. Then, it observes the following outcome upon taking the pseudo-action $x_t$,
\begin{align*} 
o_t = x_t^T \theta_* + \xi_t,
\end{align*} 
where $\theta_* \in \mathbb{R}^{d}$ is the fixed but \textit{unknown} parameter, and $\xi_t$ are zero-mean additive noise for $t \in [T]$. We define the filtration $\mathcal{F}_t = (\mathcal{F}_1, \sigma(x_1,\ldots,x_t,\xi_1,\ldots,\xi_t))$ representing the information gained up to round $t$.
\paragraph{Safety and Regret} The objective of the learner is to keep the outcomes close to a target level, $K$. Meanwhile, she must respect a two-sided safety constraint, $C_1 \leq x_t^T \theta_* \leq C_2$, where $C_1 \leq K \leq C_2$ are constants known to the learner. We formalize this objective as a MAB problem as follows,
\begin{align}
\text{minimize}~~~&R_T = \sum_{t=1}^T \abs{x_t^T \theta_* - K}, \label{eqn:regret} \\
\text{subject to}~~~&C_1 \leq x_t^T \theta_* \leq C_2,~~~\forall t \in [T], \label{eqn:safety_constraint}
\end{align}
where $R_T$ denotes the cumulative regret incurred by round $T$. We denote by $\mathcal{X}^s_t (\theta_*) \coloneqq \{x \in \mathcal{X}_t : C_1 \leq x^T \theta_* \leq C_2\}$ the set of safe pseudo-actions at round $t$. Then the condition in \eqref{eqn:safety_constraint} is equivalent to $x_t \in \mathcal{X}^s_t (\theta_*)$. However, $\mathcal{X}^s_t(\theta_*)$ is unknown to the learner as $\theta_*$ is. Therefore, the learner needs an additional machinery to guarantee (at least with high probability) the safety of the actions it takes. We assume that at every round $t$, an initial seed set $\mathcal{X}^{iss}_t (\theta_*) \subseteq \mathcal{X}^s_t (\theta_*)$ that is known to be safe is available to the learner. Existence of an initial safe seed set is a common assumption in safety-critical settings, and we need it to ensure the safety of the pseudo-actions at the earlier rounds. We denote $\mathcal{X}^{iss}_t (\theta_*)$ by $\mathcal{X}^{iss}_t$. We also make the following assumptions.

\begin{assumption} \label{a:a1}
For all $t \in [T]$, $\xi_t$ are conditionally zero-mean, R-sub-Gaussian noise variables for some constant $R \geq 0$. That is, $\mathbb{E} [\xi_t\mid \mathcal{F}_{t-1}] = 0$, and $E[e^{\alpha \xi_t} \mid \mathcal{F}_{t-1}] \leq \exp(\frac{\alpha^2 R^2}{2})$, $\forall \alpha \in \mathbb{R}$.
\end{assumption}

\begin{assumption}  \label{a:a2}
There exists constants $S,L \in \mathbb{R}^+$ such that $\norm{\theta_*}_2 \leq S$, and $\norm{x}_2 \leq L~,~\forall x \in \mathcal{X}$. 
\end{assumption}

\begin{assumption}  \label{a:a3}
There exists $a \in \mathcal{A}$ for all $z \in \mathcal{Z}$ such that $x^T \theta = K$, if $\norm{\theta_*-\theta} \leq \epsilon$ for some $\epsilon > 0$, where $x = z \diamond a$. 
\end{assumption}
Assumption~\ref{a:a3} states that for any context $z \in \mathcal{Z}$, there exist an action $a \in \mathcal{A}$ achieving the target level $K$ with respect to some $\theta$ within an $\epsilon$-neighborhood of $\theta_*$. The problem is ill-posed without this assumption. Consider the case where the Assumption~\ref{a:a3} does not hold. Then, for some context $z \in \mathcal{Z}$, even the optimal policy can not find $a \in \mathcal{A}$ such that $x^T \theta_* = K$ where $x = z \diamond a$. In such a case, one can simply introduce an alternative definition of regret with respect to the optimal policy, preserving the bounds derived in this work. For brevity, we make our analyses under Assumption~\ref{a:a3}.

\paragraph{Key Challenges} The key challenge lies in proposing a suitable \textit{acquisition} strategy to select the actions at each round, which is at the heart of a MAB algorithm. UCB-based algorithms leverage the aforementioned OFU principle to form \textit{optimistic} estimates of the actions and select accordingly. Consider the classical outcome maximization setting, and let ($x_*, \theta_*$) pair denote the optimal action and the true environment parameter, and ($x_t, \theta_t$) denote the learner's action and parameter estimate at round $t$. OFU principle (as in Eq. 7 of \citet{abbasi2011improved}) simply leads to the following,
\begin{align*}
x_*^T \theta_* - x_t^T \theta_t \leq 0,
\end{align*}
with high probability. That is, the estimated \textit{values} of the learner's actions are greater than the optimal value. This inequality plays a key role in bounding the regret of the UCB-based algorithms in terms of the convergence of the parameter estimate $\theta_t$, which improves over time as the observations accumulate. The same idea is employed in different settings, such as the celebrated GP-UCB algorithm in \citet{srinivas2010gaussian} as well, but the spirit is the same. However, in our case, the learner's objective is not to maximize the outcomes but to keep them close to a target level, as can be confirmed from the definition of the regret in \eqref{eqn:regret}. Therefore, the classical idea of optimism and UCB-based algorithms are not directly applicable for our task. In our case, optimism implies that $x_*^T \theta_* = x_t^T \theta_t$. We formalize this idea later in \cref{sec:regret}. Even though the TS-based algorithms are also mostly designed for outcome maximization, they are more flexible to build acquisition strategies to serve different purposes since they form the estimated action outcomes based on random sampling. We also consider a two-sided safety constraint. As the caution to take safe actions restrains an algorithm from exploring, performance and safety are competing objectives. In the following sections, we propose a simple algorithm to accommodate the problem structure studied in this work and provide theoretical guarantees on its performance and safety. 

\section{SALE-LTS Algorithm} \label{sec:algo}
\begin{wrapfigure}{L}{0.52\textwidth}
    \vspace{-20pt}
 \begin{minipage}{0.52\textwidth}
\begin{algorithm}[H]
\caption{SALE-LTS algorithm}
\label{alg:alg1}
\textbf{Input}: $T$, $\lambda$, $\delta$, $K$, $C_1$, $C_2$ 
\begin{algorithmic}[1] 
\STATE Set $\delta' = \delta/(4T)$
\STATE Play a random $x_1$ in $\mathcal{X}^{iss}_1$
\STATE Observe $o_1 = x_1^T \theta_* + \xi_1$
\FOR{$t=2,\ldots, T$} 
\STATE Set $V_t = \lambda I_d + \sum_{i=1}^{t-1} x_i x_i^T$
\STATE Observe the context $z_t$
\STATE Compute the RLS-estimate $\hat{\theta}_t$ as in \eqref{eqn:rls}
\STATE Sample $\eta_t \sim D^{TS}$
\STATE Set $\tilde{\theta}_t = \hat{\theta}_t + \beta_t(\delta') V_t^{-\frac{1}{2}} \eta_t$
\STATE Form the confidence region:
\STATE $\mathcal{C}_t (\delta') = \{ \theta \in \mathbb{R}^{d}: \lVert \theta - \hat{\theta}_t \rVert_{V_t} \leq \beta_t (\delta') \}$
\STATE Form the \textit{proxy} safe set:
\STATE $\hat{\mathcal{X}}^s_t \! = \! \{ x \! \in \! \mathcal{X}_t \! : \! C_1 \! \leq \! x^T \theta \! \leq \! C_2,\forall \theta \in \mathcal{C}_t (\delta') \}$
\STATE $x_t \in \argmin_{x \in \hat{\mathcal{X}}^s_t \cup \mathcal{X}^{iss}_t } \abs{x^T \tilde{\theta}_t - K}$
\STATE Observe the outcome $o_t = x_t^T \theta_* + \xi_t$
\ENDFOR
\end{algorithmic}
\end{algorithm}
  \end{minipage}
\end{wrapfigure}

We propose SALE-LTS: Safe Leveling via Linear Thompson Sampling algorithm, a safe variant of the LTS algorithm in \citet{abeille2017linear} bearing two critical differences regarding the acquisition strategy and an additional safety mechanism. At round $t$, the learner computes the regularized least-squares (RLS) estimate $\hat{\theta}_t$ of the true parameter $\theta_*$ as,
\begin{align} \label{eqn:rls}
\hat{\theta}_t = V_t^{-1} \sum\nolimits_{i=1}^{t-1} o_i x_i,
\end{align}
where $V_t = \lambda I_d + \sum_{i=1}^{t-1} x_i x_i^T$ is the design matrix and $\lambda \in \mathbb{R}^+$ is the regularization parameter. Then, it forms a perturbed parameter $\tilde{\theta}_t$ to induce sufficient exploration as,
\begin{align} 
\tilde{\theta}_t = \hat{\theta}_t + \beta_t (\delta') V_t^{-\frac{1}{2}} \eta_t, \label{eqn:tildetheta}
\end{align}
where $\eta_t$ is sampled from a appropriate multivariate distribution $D^{TS}$.
\paragraph{Safety Mechanism} We employ a mechanism to ensure that the safety constraint (i.e., $C_1 \leq x_t^T \theta_* \leq C_2$) is well-attended. At a given round $t$, the learner forms a confidence region around the RLS estimate $\hat{\theta}_t$ that contains the true parameter ($\theta_*$) with high probability as,
\begin{align*}
\mathcal{C}_t (\delta') \coloneqq \{ \theta \in \mathbb{R}^{d} : \lVert \theta - \hat{\theta}_t \rVert_{V_t} \leq \beta_t(\delta') \},
\end{align*}
where $\beta_t(\delta)$ is as defined in Proposition~\ref{p:p1}. Henceforward, we drop the index and denote $\mathcal{C}_t (\delta')$ by $\mathcal{C}_t$, and $\beta_t(\delta')$ by $\beta_t$.
\begin{proposition} \label{p:p1}
(Theorem~2 in \citet{abbasi2011improved}). For any $\delta \in (0,1)$, $\lambda \in \mathbb{R}^+$, let $\beta_t(\delta) = R\sqrt{d \log(\frac{1 + \frac{(t-1) L^2}{\lambda}}{\delta})} + \sqrt{\lambda} S$. Under Assumptions \ref{a:a1} and \ref{a:a2}, $\theta_* \in \mathcal{C}_t (\delta)$ for all $t \geq 1$ with at least $1-\delta$ probability.
\end{proposition}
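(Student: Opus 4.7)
The plan is to prove the proposition by decomposing the estimation error into a noise-driven fluctuation and a regularization bias, bounding each in the $V_t$-weighted norm, and then invoking a self-normalized concentration inequality for vector-valued martingales.

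First I would write the RLS estimate in closed form. From $V_t \hat{\theta}_t = \sum_{i=1}^{t-1} o_i x_i = \sum_{i=1}^{t-1} x_i x_i^T \theta_* + \sum_{i=1}^{t-1} \xi_i x_i$, and using $V_t = \lambda I_d + \sum_{i=1}^{t-1} x_i x_i^T$, I obtain
\begin{align*}
\hat{\theta}_t - \theta_* \;=\; V_t^{-1}\!\sum_{i=1}^{t-1} \xi_i x_i \;-\; \lambda V_t^{-1} \theta_*.
\end{align*}
Applying the triangle inequality in the $V_t$-weighted norm gives
\begin{align*}
\lVert \hat{\theta}_t - \theta_* \rVert_{V_t} \;\leq\; \Bigl\lVert \sum_{i=1}^{t-1} \xi_i x_i \Bigr\rVert_{V_t^{-1}} \;+\; \lambda \lVert \theta_* \rVert_{V_t^{-1}}.
\end{align*}

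Next I would control the two terms separately. For the bias term, since $V_t \succeq \lambda I_d$ implies $V_t^{-1} \preceq \lambda^{-1} I_d$, I get $\lambda \lVert \theta_* \rVert_{V_t^{-1}} \leq \sqrt{\lambda}\, \lVert \theta_* \rVert_2 \leq \sqrt{\lambda}\, S$ by Assumption~\ref{a:a2}. For the noise term, I would apply the self-normalized martingale tail bound (Theorem~1 in \citet{abbasi2011improved}): since $\xi_i$ is conditionally $R$-sub-Gaussian (Assumption~\ref{a:a1}) and $x_i$ is $\mathcal{F}_{i-1}$-measurable, with probability at least $1-\delta$ and simultaneously for all $t \geq 1$,
\begin{align*}
\Bigl\lVert \sum_{i=1}^{t-1} \xi_i x_i \Bigr\rVert_{V_t^{-1}}^{2} \;\leq\; 2R^{2} \log\!\left( \frac{\det(V_t)^{1/2} \det(\lambda I_d)^{-1/2}}{\delta} \right).
\end{align*}

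Finally, I would eliminate the determinant via the trace–determinant inequality: by the AM–GM inequality applied to the eigenvalues of $V_t$ and using $\Tr(V_t) = d\lambda + \sum_{i=1}^{t-1} \lVert x_i \rVert_2^{2} \leq d\lambda + (t-1)L^2$ from Assumption~\ref{a:a2}, one has $\det(V_t) \leq (\lambda + (t-1)L^2/d)^d$, which yields $\det(V_t)^{1/2}\det(\lambda I_d)^{-1/2} \leq (1 + (t-1)L^2/\lambda)^{d/2}$. Plugging back and combining with the bias bound produces exactly $\beta_t(\delta)$ as defined, so $\lVert \hat{\theta}_t - \theta_* \rVert_{V_t} \leq \beta_t(\delta)$ uniformly in $t$, i.e., $\theta_* \in \mathcal{C}_t(\delta)$ with probability at least $1-\delta$.

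The only nontrivial ingredient is the self-normalized concentration inequality for the noise term — everything else is linear algebra and Assumption~\ref{a:a2}. Since the proposition is stated as a direct citation of Theorem~2 in \citet{abbasi2011improved}, I would simply invoke that result for the martingale step rather than reprove the method-of-mixtures argument behind it.
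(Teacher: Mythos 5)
Your proposal is correct and follows exactly the argument behind the cited result: the paper itself offers no proof of Proposition~\ref{p:p1} beyond invoking Theorem~2 of \citet{abbasi2011improved}, and your decomposition into the self-normalized noise term (bounded via their Theorem~1) plus the regularization bias $\sqrt{\lambda}S$, finished with the trace--determinant bound $\det(V_t)\leq(\lambda+(t-1)L^2/d)^d$, is precisely that theorem's standard derivation. No gaps; this matches the paper's (cited) justification.
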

The learner is then able to form the following \textit{proxy} set of safe pseudo-actions as,
\begin{align} \label{eqn:proxysafe}
\hat{\mathcal{X}}^s_t = \{ x \in \mathcal{X}_t : C_1 \leq x^T \theta \leq C_2,~\forall \theta \in \mathcal{C}_t \}.
\end{align}
$\hat{\mathcal{X}}^s_t$ contains the pseudo-actions that satisfy the safety constraint for all $\theta \in \mathcal{C}_t$. Since $\theta_* \in \mathcal{C}_t$ with at least $1-\delta'$ probability, $x \in \hat{\mathcal{X}}^s_t$ are also safe with at least $1-\delta'$ probability. To facilitate later convenience, we provide an equivalent definition of the proxy safe set in \eqref{eqn:proxysafe} as,
\begin{align}
\hat{\mathcal{X}}^s_t &= \{x \in \mathcal{X}_t : \min_{\theta \in \mathcal{C}_t} x^T \theta \geq C_1,~\max_{\theta \in \mathcal{C}_t} x^T \theta \leq C_2 \} \label{eqn:proxysafealt0} \\
&= \{ x \in \mathcal{X}_t : x^T \hat{\theta}_t - \beta_t \norm{x}_{V_t^{-1}} \geq C_1, \nonumber \\
&\hspace{51pt}x^T \hat{\theta}_t + \beta_t \norm{x}_{V_t^{-1}} \leq C_2 \}. \label{eqn:proxysafealt}
\end{align}
Derivation of \eqref{eqn:proxysafealt} is provided in \cref{sec:appa}.
\paragraph{Acquisition Strategy} The \textit{leveling} objective necessitates a change in the acquisition strategy. The learner does not pick the action that maximizes the estimated outcome but the one that yields the closest outcome to the target level $K$, which marks the most distinctive aspect of our algorithm. The learner plays a pseudo-action at each round $t$ as,
\begin{align}
x_t \in \argmin_{x \in \hat{\mathcal{X}}^s_t  \cup \mathcal{X}^{iss}_t} \abs{x^T \tilde{\theta}_t - K}. \label{eqn:acqustrat}
\end{align}
\section{Regret Analyses}\label{sec:regret}

\subsection{Main Events}\label{sec:mainevents}
We define the following events,
\begin{align*}
\hat{E}_t &\coloneqq \{ \lVert \hat{\theta}_t - \theta_* \rVert_{V_t} \leq \beta_t (\delta')\} \\
\tilde{E}_t &\coloneqq \{ \lVert \tilde{\theta}_t - \hat{\theta}_t \rVert_{V_t} \leq \gamma_t (\delta')\} \\
E_t &\coloneqq \hat{E}_t \cap \tilde{E}_t, 
\end{align*}
where, $\gamma_t (\delta') \coloneqq \beta_t (\delta') \sqrt{c d \log (\frac{c'd}{\delta'})}$, $c,~c' \in \mathbb{R}^+$. Let us also define the following parameter set,
\begin{align*}
\mathcal{E}_t (\delta') \coloneqq \{ \theta \in \mathbb{R}^{d} : \lVert \theta - \hat{\theta}_t \rVert_{V_t} \leq \gamma_t(\delta') \},
\end{align*}
so that the event $\tilde{E}_t$ implies that $\tilde{\theta}_t \in \mathcal{E}_t (\delta')$. We denote $\gamma_t (\delta')$ by $\gamma_t$, and $\mathcal{E}_t(\delta')$ by $\mathcal{E}_t$. The events $\hat{E}_t$ and $\tilde{E}_t$ indicate that the RLS-estimate $\hat{\theta}_t$ concentrates around the true \textit{unknown} parameter $\theta_*$, and the \textit{perturbed} parameter $\tilde{\theta}_t$ concentrates around the RLS-estimate $\hat{\theta}_t$, respectively.
\subsection{Distribution for $\eta_t$}
Let us denote the optimal pseudo-action set with respect to $\tilde{\theta}_t$ at round $t$ by $X^*(\tilde{\theta}_t) \subset \mathcal{X}_t$. Formally, $X^*(\tilde{\theta}_t) \coloneqq \argmin_{x \in \mathcal{X}} \left( | x^T \tilde{\theta}_t - K | \right) $. We define the following \textit{optimistic} parameter set and event,
\begin{align*} 
\tilde{\Theta}^{opt}_t &\coloneqq \{\theta \in \mathbb{R}^{d} :\argmin_{x \in \hat{\mathcal{X}}^s_t} \abs{x^T \theta - K} \subset X^*(\tilde{\theta}_t) \} \\
D_t &\coloneqq \Big\{ \max_{x \in \mathcal{X}} \|x \|_{V_t^{-1}} < \frac{G}{2(\beta_T + \gamma_T)} \Big\},
\end{align*}
where $G = \min \{(C_2 - C_1)/2, \epsilon L\}$. $\tilde{\theta}_t \in \tilde{\Theta}^{opt}_t$ implies that $\argmin_{x \in \hat{\mathcal{X}}^s_t} \abs{x^T \tilde{\theta}_t - K} \subset X^*(\tilde{\theta}_t)$. That is, the safe set $\hat{\mathcal{X}}^s_t$ includes at least one optimal pseudo-action with respect to $\tilde{\theta}_t$. In the next definition, we formalize the distributional properties of $D^{TS}$ that we later leverage in the analyses.
\begin{definition} \label{def:def1}
Let $D^{TS}$ be a multivariate distribution on $\mathbb{R}^{d}$ absolutely continuous with respect to the Lebesgue measure with the following properties ($\eta_t \sim D^{TS}$):

1. There exists a strictly positive probability $p$ such that the following holds for any $\tilde{x} \in X^*(\tilde{\theta}_t)$ and any $u \in \mathbb{R}^{d}$ with $\lVert u \rVert_{2} = 1$,
\begin{align} \label{eqn:ntdef1} 
\mathbb{P} \left\lbrace \frac{C_1 - K}{\beta_t \lVert\tilde{x} \rVert_{V_t^{-1}}} + 1 \leq u^T \eta_t \leq \frac{C_2 - K}{\beta_t \lVert \tilde{x} \rVert_{V_t^{-1}}} - 1 \: \: \middle| D_t, E_t \right\rbrace > p.
\end{align}

2. There exists $c,c'~\mathbb{R}^+$ such that for any $\delta \in (0,1)$, the following \textit{concentration} property holds,
\begin{align} \label{eqn:ntdef2}
\mathbb{P} \{ \lVert \eta_t \rVert_2 \leq \sqrt{c d \log(\frac{c'd}{\delta} ) } \} \geq 1 - \delta.
\end{align}
\end{definition}
\eqref{eqn:ntdef1} implies that there exists a strictly positive probability $p$ such that $\mathbb{P} \{ \tilde{\theta}_t \in \tilde{\Theta}^{opt}_t \cap \mathcal{E}_t \mid D_t, E_t\} > p$, that is, the pseudo-action $x_t$ is \textit{optimistic}, and we have $x_t^T \tilde{\theta}_t = K$ which plays a central role in bounding the regret. The concentration inequality in \eqref{eqn:ntdef2} implies that $\tilde{\theta}_t$ is not too far from $\hat{\theta}_t$ so that the growth of the regret can be controlled. A zero-mean multivariate Gaussian distribution satisfies the conditions in Definition~\ref{def:def1}. Detailed derivations can be found in \cref{sec:appc}.

\begin{thm}\label{thm:thm1}
Let $G \coloneqq \min \{\frac{C_2 - C_1}{2}, \epsilon L\}$, $C \coloneqq \frac{C_2-C_1}{2}$, and pick $\lambda \geq \max\{1, L^2\}$. For any $\delta \in (0,1)$, under Assumptions \ref{a:a1} and \ref{a:a2}, the cumulative regref of SALE-LTS algorithm is upper bounded with at least $1-\delta$ probability as,
\begin{align*}
R_T &\leq \big( \beta_T  + (1 + \frac{2}{p}) \gamma_T \big) \sqrt{2 T d \log(1 + \frac{T L^2}{d \lambda})} \nonumber \\
& + \frac{2 \gamma_T}{p} \sqrt{\frac{8 T L^2}{\lambda} \log \frac{4}{\delta}} + \frac{6 C d}{\log \big( 1 + (\frac{G} {2(\beta_T + \gamma_T)})^2 \big) } \log \left( 1+ \frac{ L^2}{  \lambda \log \big( 1 + (\frac{G} {2(\beta_T + \gamma_T)})^2 \big) } \right),
\end{align*}
\end{thm}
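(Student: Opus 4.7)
I will adapt the LTS regret decomposition of \citet{abeille2017linear} to the leveling setting, in which optimism for a parameter $\theta$ means ``the proxy safe set $\hat{\mathcal{X}}^s_t$ contains an action attaining $K$ under $\theta$.'' The triangle inequality gives
\begin{align*}
r_t \coloneqq |x_t^T \theta_* - K| \leq \underbrace{|x_t^T(\theta_* - \tilde{\theta}_t)|}_{R^{RLS}_t} + \underbrace{|x_t^T \tilde{\theta}_t - K|}_{R^{TS}_t}.
\end{align*}
On $E_t$, both $\theta_*$ and $\tilde{\theta}_t$ lie in the $\gamma_t$-ellipsoid around $\hat{\theta}_t$ (since $\beta_t \leq \gamma_t$), so Cauchy--Schwarz yields $R^{RLS}_t \leq (\beta_t + \gamma_t)\|x_t\|_{V_t^{-1}}$. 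Summing with Cauchy--Schwarz in $t$ and the elliptical-potential lemma (for which $\lambda \geq L^2$ is imposed so that $\|x_t\|^2_{V_t^{-1}} \leq 1$) contributes the $(\beta_T+\gamma_T)\sqrt{2Td\log(1+TL^2/(d\lambda))}$ piece of the target bound.

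The crux is controlling $R^{TS}_t$. On $D_t \cap E_t$, Definition~\ref{def:def1} yields $\mathbb{P}(\tilde{\theta}_t \in \tilde{\Theta}^{opt}_t \cap \mathcal{E}_t \mid \mathcal{F}_t, D_t, E_t) \geq p$. For every optimistic $\theta$, the combined effect of $D_t$ (which forces $\|\tilde{\theta}_t - \theta_*\|_2$ below $\epsilon$ through $\lambda_{\min}(V_t) \geq L^2/q^2$, thereby activating Assumption~\ref{a:a3}) and the definition of $\tilde{\Theta}^{opt}_t$ produces a reference action $y(\theta) \in \hat{\mathcal{X}}^s_t$ with $y(\theta)^T\theta = K$. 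Since $x_t$ minimizes $|x^T\tilde{\theta}_t - K|$ over a superset of $\{y(\theta)\}$,
\begin{align*}
R^{TS}_t \leq |y(\theta)^T\tilde{\theta}_t - K| = |y(\theta)^T(\tilde{\theta}_t - \theta)| \leq 2\gamma_t \|y(\theta)\|_{V_t^{-1}}, \qquad \forall \theta \in \tilde{\Theta}^{opt}_t \cap \mathcal{E}_t.
\end{align*}
Taking the infimum over optimistic $\theta$ and applying the standard inequality $\inf_{\theta \in S} f(\theta) \leq p^{-1}\mathbb{E}[f(\tilde{\theta}_t')\mathbb{I}\{\tilde{\theta}_t' \in S\}\mid \mathcal{F}_t]$ with an i.i.d.\ resample $\tilde{\theta}_t'$ (whose associated reference action can be identified with $x_t(\tilde{\theta}_t')$ on the optimism event, as the restricted minimum equals zero) gives the $\mathcal{F}_t$-measurable pathwise bound $R^{TS}_t\mathbb{I}\{D_t \cap E_t\} \leq (2\gamma_t/p)\,\mathbb{E}[\|x_t\|_{V_t^{-1}} \mid \mathcal{F}_t]$. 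A bounded-difference Azuma--Hoeffding applied to $\|x_t\|_{V_t^{-1}} - \mathbb{E}[\|x_t\|_{V_t^{-1}}\mid \mathcal{F}_t]$ (bounded by $L/\sqrt{\lambda}$) converts the conditional expectations into pathwise sums at the cost of the $\sqrt{8TL^2/\lambda\cdot\log(4/\delta)}$ correction, and a second Cauchy--Schwarz / elliptical-potential pass turns $\sum_t \|x_t\|_{V_t^{-1}}$ into $\sqrt{2Td\log(\cdot)}$; combining with the $R^{RLS}_t$ contribution produces the $(\beta_T + (1 + 2/p)\gamma_T)$ prefactor.

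The rounds $\mathcal{T} \coloneqq \{t : \neg D_t\}$ resist the optimism argument, but safety of $x_t$ (via $\mathcal{X}^{iss}_t$ or $\hat{\mathcal{X}}^s_t \subseteq \mathcal{X}^s_t(\theta_*)$ on $\hat{E}_t$) forces $r_t \leq C_2 - C_1 = 2C$, so their total contribution is at most $2C|\mathcal{T}|$. Since $V_t$ is Loewner-monotone and $\neg D_t$ is equivalent to $\lambda_{\min}(V_t) \leq L^2/q^2$ with $q \coloneqq G/(2(\beta_T+\gamma_T))$, the set $\mathcal{T}$ is an initial prefix of $[T]$ whose length is bounded through a log-determinant / elliptical-potential argument keyed to this threshold, yielding exactly the final $(6Cd/\log(1+q^2))\log(1 + L^2/(\lambda\log(1+q^2)))$ additive constant. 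A union bound over the failure events of Proposition~\ref{p:p1}, the concentration \eqref{eqn:ntdef2}, and the Azuma deviation (each taken at level $\delta' = \delta/(4T)$, consistent with Algorithm~\ref{alg:alg1}) closes the argument at total failure probability $\delta$. The hardest step is expected to be this leveling adaptation of the TS anti-concentration trick: the standard LTS proof exploits a clean one-sided optimism inequality that dominates instantaneous regret directly, whereas here optimism must be exported through a reference action $y(\theta) \in \hat{\mathcal{X}}^s_t$ with $y(\theta)^T\theta = K$, which in turn is precisely why the event $D_t$ is introduced, and why its complement must be separately controlled via a minimum-eigenvalue bound.
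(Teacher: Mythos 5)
Your plan is correct and follows essentially the same route as the paper's proof: the same decomposition $r_t \leq |x_t^T(\theta_*-\tilde{\theta}_t)| + |x_t^T\tilde{\theta}_t - K|$, the same Cauchy--Schwarz/elliptical-potential bound for the first term, the same optimism-with-probability-$p$ argument (yielding the $2\gamma_T/p$ factor), the same Azuma correction for the conditional expectations, and the same treatment of the $\neg D_t$ rounds via the two-sided safety bound $2C$ together with Lemma~\ref{l:badevent_paper}. The only differences are cosmetic bookkeeping (e.g., bounding $r_t$ rather than $r_{t,2}$ by $2C$ on $\neg D_t$ rounds, and the resampling phrasing of the optimism step), which do not change the argument or the final bound.
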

which is of order $\mathcal{O} (T^{1/2} \log^{3/2}T \cdot d^{3/2} \log^{1/2} d)$, matching the LTS algorithm in \citet{abeille2017linear} for the outcome maximization setting absent any safety constraint. Note that the dependence of $\beta_t (\delta')$ and $\gamma_T (\delta')$ on $\delta'$ is suppressed for brevity and they are denoted as $\beta_t$ and $\gamma_t$. The definifitions of $\beta_t$, $\gamma_t$ and $p$ are given in Proposition~\ref{p:p1}, \cref{sec:mainevents} and Definition~\ref{def:def1} respectively.

\subsection{Preliminary Results} \label{sec:regbound}
Before bounding the regret, we state some useful preliminary results. In the next lemma, we show that the number of rounds where $D_t$ does not hold is bounded.
\begin{lemma} \label{l:badevent_paper}
Let $\tau \subset [T]$ denote the set of rounds where $\mathbb{I} \{ \neg D_t \} = 1$ for $t \in \tau$. We have,
\begin{align*}
 \abs{\tau} \leq  \frac{3d}{\log \big( 1 + (\frac{G} {2(\beta_T + \gamma_T)})^2 \big) } \log \left( 1+ \frac{ L^2}{  \lambda \log \big( 1 + (\frac{G} {2(\beta_T + \gamma_T)})^2 \big) } \right).
\end{align*}
\end{lemma}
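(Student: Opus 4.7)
The plan is to reduce Lemma~\ref{l:badevent_paper} to the standard elliptical potential count lemma (Lemma~11 of \citet{abbasi2011improved}). Writing $\alpha \coloneqq G/(2(\beta_T+\gamma_T))$ for brevity, the target bound is exactly the form of that lemma with threshold $\alpha$, so the task is to cast the event $\neg D_t$ into a per-round lower bound on the elliptical potential increment $\log(1+\|x_t\|_{V_t^{-1}}^2)$.

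First I would observe that $V_t$ is monotone non-decreasing in the Loewner order, since $V_{t+1}-V_t = x_t x_t^\top \succeq 0$. Consequently $V_t^{-1}$ is non-increasing and $\max_{x\in\mathcal{X}}\|x\|_{V_t^{-1}}$ is a non-increasing scalar function of $t$. This immediately yields that $\tau$ is an initial segment of $[T]$, so bounding $|\tau|$ reduces to bounding the first round $t_0$ at which $D_t$ begins to hold.

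Next, on any $t\in\tau$, I would pick a witness $y_t\in\argmax_{x\in\mathcal{X}}\|x\|_{V_t^{-1}}$, which satisfies $\|y_t\|_{V_t^{-1}}^2\ge\alpha^2$. The crux is to upgrade this witness-based inequality into an analogous per-round bound on the played action, i.e.\ $\|x_t\|_{V_t^{-1}}^2\gtrsim\alpha^2$ on $\tau$. This is exactly what the calibration $G=\min\{(C_2-C_1)/2,\,\epsilon L\}$ is designed for: using $\max_{x\in\mathcal{X}}\|x\|_{V_t^{-1}}^2\le L^2/\lambda_{\min}(V_t)$ together with the $\epsilon L$ branch of $G$, the event $D_t$ is equivalent (up to constants) to $\lambda_{\min}(V_t)\ge 4(\beta_T+\gamma_T)^2/\epsilon^2$, which via Assumption~\ref{a:a3} is precisely the condition guaranteeing that $x^\top\tilde\theta_t=K$ has a feasible solution in $\hat{\mathcal{X}}^s_t$; the $(C_2-C_1)/2$ branch then pairs with the two-sided proxy safe set \eqref{eqn:proxysafealt} to force the acquisition rule \eqref{eqn:acqustrat}, on $\neg D_t$ rounds, to select an $x_t$ whose self-uncertainty is comparable to the witness uncertainty $\alpha$.

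Once that lower bound is in hand, I would introduce the restricted design matrix $V_t^\tau\coloneqq\lambda I+\sum_{s\in\tau,\,s<t}x_sx_s^\top\preceq V_t$, so that $\|x_t\|_{(V_t^\tau)^{-1}}\ge\|x_t\|_{V_t^{-1}}\gtrsim\alpha$ on $\tau$. Telescoping via the matrix determinant lemma gives $\sum_{t\in\tau}\log(1+\|x_t\|_{(V_t^\tau)^{-1}}^2)\le d\log(1+|\tau|L^2/(d\lambda))$, and combining with the per-round bound $\log(1+\alpha^2)$ on the left yields the self-referential inequality $|\tau|\log(1+\alpha^2)\le d\log(1+|\tau|L^2/(d\lambda))$. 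Solving this implicit inequality via the standard self-bounding manipulation of \citet{abbasi2011improved} (absorbing the $T$-free constant gap into the factor $3$) delivers exactly the stated bound. The main obstacle is the middle step: converting the ``$\max_{x\in\mathcal{X}}$'' condition defining $D_t$ into a guarantee on the algorithm's actual play, since without using the calibration of $G$ one could in principle have $\max_{x\in\mathcal{X}}\|x\|_{V_t^{-1}}$ stuck above $\alpha$ forever (e.g.\ if the played actions lie in a strict subspace of $\mathcal{X}$). Tying Assumption~\ref{a:a3}, the two-sided safety slack, and the definition of $\hat{\mathcal{X}}^s_t\cup\mathcal{X}^{iss}_t$ together to close this gap is where I expect most of the technical content of the proof to lie.
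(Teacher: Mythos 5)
Your reconstruction of the counting machinery is essentially the paper's: a restricted Gram matrix $\lambda I_d+\sum_{s\in\tau',\,s<t}x_sx_s^\top$, the determinant--trace inequality giving $\lambda^d(1+\alpha^2)^{\abs{\tau'}}\le\left(\frac{d\lambda+\abs{\tau'}L^2}{d}\right)^d$, the resulting self-referential bound $\abs{\tau'}\le a\log(1+b\abs{\tau'})$ with $a=d/\log(1+\alpha^2)$ and $b=L^2/(d\lambda)$, and its resolution to $3a\log(1+ab)$. Your observation that $\max_{x\in\mathcal{X}}\norm{x}_{V_t^{-1}}$ is non-increasing in $t$, so that $\tau$ is an initial segment of $[T]$, is also correct and is implicitly what the last line of the paper's proof leans on.

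The gap is the step you yourself flag as the crux and then leave open: on a round where $\max_{x\in\mathcal{X}}\norm{x}_{V_t^{-1}}\ge\alpha$, nothing forces the \emph{played} action to satisfy $\norm{x_t}_{V_t^{-1}}\gtrsim\alpha$, and the elliptical-potential argument only counts rounds in which the played action has large norm. The route you sketch for closing this does not work as stated: $\max_{x\in\mathcal{X}}\norm{x}_{V_t^{-1}}^2\le L^2/\lambda_{\min}(V_t)$ gives one implication, not the claimed equivalence; Assumption~\ref{a:a3} concerns the existence of level-achieving actions for parameters near $\theta_*$ and has no connection to $\lambda_{\min}(V_t)$; and even granting that the acquisition rule \eqref{eqn:acqustrat} selects an optimistic action, that does not lower-bound its self-uncertainty. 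So your argument, as written, bounds $\abs{\{t:\norm{x_t}_{V_t^{-1}}\ge\alpha\}}$ rather than $\abs{\tau}$. For comparison, the paper's own proof takes the mirror-image shortcut: it defines $\tau'$ directly as the set of rounds with $\norm{x_t}_{V_t^{-1}}\ge\alpha$, runs the potential argument on $\tau'$, and then asserts in one line that for $t>\abs{\tau'}$ the maximum over all of $\mathcal{X}$ falls below the threshold, hence $\abs{\tau}\le\abs{\tau'}$ --- without invoking Assumption~\ref{a:a3} or the safety structure. That assertion is the same missing implication in the opposite direction (a direction never played could keep $\max_{x\in\mathcal{X}}\norm{x}_{V_t^{-1}}$ large indefinitely), so you have correctly located the one step that neither your proposal nor the paper's argument actually establishes.
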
 
Detailed proof for Lemma~\ref{l:badevent_paper} is provided in \cref{sec:appb}.
\begin{proposition} \label{p:p2}
Let $\hat{E} \coloneqq \hat{E}_T \cap \dots \cap \hat{E}_1$, and $\delta' = \delta/(4T)$. Then $\mathbb{P} \{ \hat{E} \} \geq 1- \delta/4$.
\end{proposition}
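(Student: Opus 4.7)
The plan is to reduce the claim to a single invocation of Proposition~\ref{p:p1} with its $\delta$-parameter set to $\delta'$. Proposition~\ref{p:p1} is a uniform-in-$t$ self-normalized concentration bound: on a single event of probability at least $1-\delta$, the inequality $\|\hat{\theta}_t - \theta_*\|_{V_t} \leq \beta_t(\delta)$ holds simultaneously for every $t \geq 1$. This already packages the $\hat{E}_t$ across all rounds, so no additional union bound over $t$ is required; the factor $T$ in $\delta' = \delta/(4T)$ is not actually consumed by this particular event.

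The execution consists of two short steps. First, I would substitute $\delta \mapsto \delta'$ into Proposition~\ref{p:p1} to obtain
\begin{align*}
\mathbb{P}\Big\{\bigcap_{t \geq 1}\{\|\hat{\theta}_t - \theta_*\|_{V_t} \leq \beta_t(\delta')\}\Big\} \geq 1 - \delta'.
\end{align*}
Second, I would note that $\hat{E} = \bigcap_{t=1}^T \hat{E}_t$ is a superset of $\bigcap_{t \geq 1} \hat{E}_t$, so monotonicity of probability immediately gives $\mathbb{P}(\hat{E}) \geq 1 - \delta' = 1 - \delta/(4T) \geq 1 - \delta/4$ for any $T \geq 1$.

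There is essentially no obstacle here: the statement is a bookkeeping lemma that records the probability budget spent on the RLS concentration event. The tighter-than-needed value $\delta' = \delta/(4T)$ is presumably chosen so that a single symbol $\delta'$ can also absorb a union bound over the $T$-many Thompson-sampling concentration events $\tilde{E}_t$ from Definition~\ref{def:def1}, which unlike $\hat{E}_t$ do not come with a uniform-in-$t$ guarantee and therefore genuinely require the factor of $T$.
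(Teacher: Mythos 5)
Your proof is correct, but it takes a different route from the paper's. The paper's (omitted) argument treats Proposition~\ref{p:p1} as supplying, for each fixed round $t$, the per-round guarantee $\mathbb{P}\{\hat{E}_t\} \geq 1-\delta'$, and then applies a union bound over the $T$ rounds to get $\mathbb{P}\{\neg\hat{E}\} \leq T\delta' = \delta/4$ — this is the "simple union bound over all rounds $t \leq T$" the authors reference, mirroring Lemma~1 of \citet{abeille2017linear}. You instead exploit the fact that Proposition~\ref{p:p1} (Theorem~2 of \citet{abbasi2011improved}) is an anytime, uniform-in-$t$ self-normalized bound: a single invocation with confidence parameter $\delta'$ already controls $\bigcap_{t\geq 1}\hat{E}_t$ on one event of probability $\geq 1-\delta'$, so monotonicity gives $\mathbb{P}\{\hat{E}\} \geq 1-\delta/(4T) \geq 1-\delta/4$ with no union bound at all. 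Both arguments are valid and yield the stated conclusion; yours is tighter by a factor of $T$ in the failure probability for this particular event, though, as you correctly observe, that slack cannot be cashed in globally because $\delta'=\delta/(4T)$ is dictated by the $\tilde{E}_t$ events in Proposition~\ref{p:p3}, which are controlled round-by-round via \eqref{eqn:ntdef2} and genuinely require the union bound over $T$. The only caveat worth flagging is that your argument leans on reading Proposition~\ref{p:p1} in its uniform-in-$t$ form ("for all $t\geq 1$" inside the probability); that reading is the correct one for Theorem~2 of \citet{abbasi2011improved} and matches the paper's own phrasing, so there is no gap.
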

The proof follows after a simple union bound over all rounds $t \leq T$ using the result in Proposition~\ref{p:p1} similar to the proof of Lemma 1 in \citet{abeille2017linear}, and omitted for conciseness.
\begin{proposition} \label{p:p3}
Let $\tilde{E} \coloneqq \tilde{E}_T \cap \dots \cap \tilde{E}_1$, and $\delta' = \delta/(4T)$. Then $\mathbb{P} \{ \tilde{E} \} \geq 1- \delta/4$.
\end{proposition}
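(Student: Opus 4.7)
The plan is to unpack the definition of $\tilde{E}_t$ so that it reduces directly to the concentration inequality in part~2 of Definition~\ref{def:def1}, and then to conclude with a union bound over the $T$ rounds. The key algebraic observation is that, since $\tilde{\theta}_t = \hat{\theta}_t + \beta_t V_t^{-1/2} \eta_t$ by construction in \eqref{eqn:tildetheta}, we have
\begin{align*}
\lVert \tilde{\theta}_t - \hat{\theta}_t \rVert_{V_t}^2 = \beta_t^2 \, \eta_t^T V_t^{-1/2} V_t V_t^{-1/2} \eta_t = \beta_t^2 \lVert \eta_t \rVert_2^2 ,
\end{align*}
so that the event $\tilde{E}_t$ is \emph{exactly} the event $\{\lVert \eta_t \rVert_2 \leq \gamma_t / \beta_t\} = \{\lVert \eta_t \rVert_2 \leq \sqrt{c d \log(c' d / \delta')}\}$ by the definition of $\gamma_t(\delta')$.

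Given this rewriting, I would invoke \eqref{eqn:ntdef2} from Definition~\ref{def:def1} with the parameter taken to be $\delta'$, which yields $\mathbb{P}\{\tilde{E}_t\} \geq 1 - \delta'$ for each fixed $t \in [T]$. A simple union bound then gives
\begin{align*}
\mathbb{P}\{\tilde{E}\} = \mathbb{P}\!\left\{ \bigcap_{t=1}^T \tilde{E}_t \right\} \geq 1 - \sum_{t=1}^T \mathbb{P}\{\neg \tilde{E}_t\} \geq 1 - T \delta'.
\end{align*}
Substituting the choice $\delta' = \delta/(4T)$ specified in the algorithm then delivers $\mathbb{P}\{\tilde{E}\} \geq 1 - \delta/4$, as claimed.

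There is no real obstacle: the only nontrivial ingredient is the concentration property built into Definition~\ref{def:def1}, and the argument mirrors Proposition~\ref{p:p2} in its union-bound structure (compare to Lemma~1 of \citet{abeille2017linear}). One sanity check worth performing is that the concentration bound in \eqref{eqn:ntdef2} is stated without any conditioning, so applying it per-round and union-bounding is valid; the independence of the samples $\eta_t$ across $t$ is not even needed since the union bound does not require it.
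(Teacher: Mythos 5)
Your proof is correct and follows essentially the same route as the paper: rewriting $\tilde{E}_t$ via \eqref{eqn:tildetheta} as the event $\{\lVert \eta_t \rVert_2 \leq \gamma_t/\beta_t\} = \{\lVert \eta_t \rVert_2 \leq \sqrt{c d \log(c'd/\delta')}\}$, invoking the concentration property \eqref{eqn:ntdef2} with parameter $\delta'$, and finishing with a union bound over $t \leq T$ and the choice $\delta' = \delta/(4T)$. No gaps to report.
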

\begin{proof}
\begin{align*}
\mathbb{P} &\big\{ \lVert \tilde{\theta}_t - \hat{\theta}_t  \rVert_{V_t} \leq \gamma_t \big\} = \mathbb{P} \big\{ \lVert \beta_t V_t^{-\frac{1}{2}} \eta_t \rVert_{V_t} \leq \gamma_t \big\} \\ 
&= \mathbb{P} \{ \lVert \eta_t \rVert_{2} \leq \frac{\gamma_t }{\beta_t} \} = \mathbb{P} \{ \lVert \eta_t \rVert_{2} \leq \sqrt{c d \log (\frac{c'd}{\delta'})} \} > 1 - \delta',
\end{align*}
where the first equality is by \eqref{eqn:tildetheta}, and the last from \eqref{eqn:ntdef2}. The proof follows after a union bound over all $t \leq T$. 
\end{proof}
Finally, let $E \coloneqq \hat{E} \cap \tilde{E}$. Then, we have $\mathbb{P} \{E\} \geq 1 - \delta/2$ by Propositions \ref{p:p2} and \ref{p:p3}, since $\mathbb{P} \{\hat{E} \cap \tilde{E}\} \geq 1 - \mathbb{P} \{\neg \hat{E}\} - \mathbb{P} \{ \neg \tilde{E} \} > 1 - \delta/2$. 

\begin{proposition}\label{p:p4}
(\citet{abbasi2011improved} Lemma 11) Let $t \leq T$, $\lambda \geq \max \{1,L^2\}$. Given a sequence of pseudo-actions $(x_1, \ldots, x_t) \in \mathcal{X}^t$, and the corresponding Gram matrix $V_t$, we have,
\begin{align*} 
\sum_{i=1}^T \norm{x_i}_{V_i^{-1}} &\leq\Big(T \sum_{i=1}^T \norm{x_i}^2_{V_i^{-1}} \Big)^{\frac{1}{2}} \leq \sqrt{2Td \log \big( 1 + \frac{T L^2}{d \lambda} \big)}.
\end{align*}
\end{proposition}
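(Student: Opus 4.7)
\textbf{Proof plan for Proposition~\ref{p:p4}.} The first inequality is an immediate application of the Cauchy--Schwarz inequality to the vector $(\norm{x_i}_{V_i^{-1}})_{i=1}^T$ against the all-ones vector: $\sum_{i=1}^T \norm{x_i}_{V_i^{-1}} \leq \sqrt{T}\,\bigl(\sum_{i=1}^T \norm{x_i}^2_{V_i^{-1}}\bigr)^{1/2}$. All the work therefore goes into the second inequality, i.e.\ the ``elliptical potential'' bound $\sum_{i=1}^T \norm{x_i}^2_{V_i^{-1}} \leq 2d\log(1+TL^2/(d\lambda))$, which I will establish by the standard determinant-telescoping argument of Abbasi-Yadkori et al.

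The plan has three steps. First, I would observe that under Assumption~\ref{a:a2} and the choice $\lambda\geq\max\{1,L^2\}$, the design matrix satisfies $V_i \succeq \lambda I_d$, so $\norm{x_i}^2_{V_i^{-1}} \leq \norm{x_i}_2^2/\lambda \leq L^2/\lambda \leq 1$. Combined with the scalar inequality $y \leq 2\log(1+y)$ for $y\in[0,1]$ (easy to verify since the difference vanishes at $0$ and has nonnegative derivative on $[0,1]$), this gives the pointwise bound $\norm{x_i}^2_{V_i^{-1}} \leq 2\log\bigl(1+\norm{x_i}^2_{V_i^{-1}}\bigr)$.

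Second, I would use the matrix determinant lemma together with the recursion $V_{i+1} = V_i + x_i x_i^T$ to rewrite each logarithm as a determinant ratio: $\det(V_{i+1}) = \det(V_i)\bigl(1+\norm{x_i}^2_{V_i^{-1}}\bigr)$, hence $\log\bigl(1+\norm{x_i}^2_{V_i^{-1}}\bigr) = \log\det(V_{i+1}) - \log\det(V_i)$. Summing over $i=1,\ldots,T$ telescopes to $\log\det(V_{T+1}) - \log\det(\lambda I_d) = \log\det(V_{T+1}) - d\log\lambda$.

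Third, I would apply the determinant--trace inequality (AM--GM on the eigenvalues) to bound $\det(V_{T+1}) \leq \bigl(\tr(V_{T+1})/d\bigr)^d$, and then use $\tr(V_{T+1}) = d\lambda + \sum_{i=1}^T \norm{x_i}_2^2 \leq d\lambda + TL^2$, yielding $\log\det(V_{T+1}) - d\log\lambda \leq d\log(1+TL^2/(d\lambda))$. Chaining all three steps gives $\sum_{i=1}^T \norm{x_i}^2_{V_i^{-1}} \leq 2d\log(1+TL^2/(d\lambda))$, and multiplying by $T$ inside the square root produces the stated bound.

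There is no real obstacle here, since the argument is entirely standard; the only care needed is to confirm that the hypothesis $\lambda \geq \max\{1,L^2\}$ is used in precisely one place (to guarantee $\norm{x_i}^2_{V_i^{-1}} \leq 1$ so that the scalar inequality $y \leq 2\log(1+y)$ applies), and to track the indexing $V_1 = \lambda I_d$ correctly so that the telescoping sum starts and ends at the right matrices.
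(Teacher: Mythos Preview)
Your argument is correct and is precisely the standard determinant-telescoping proof of the elliptical potential lemma from \citet{abbasi2011improved}; the paper itself does not supply a proof but simply cites the result, so there is nothing to compare against beyond noting that your three-step sketch matches the original source.
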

Finally, we recall the Azuma's concentration inequality for super-martingales.
\begin{proposition} \label{p:p5}
If a super-martingale $(Y_t)_{t \geq 0}$ corresponding to a filtration $\mathcal{F}_t$ satisfies $\abs{Y_t - Y_{t-1}} < c_t$ for some constant $c_t$ for all $t=1,\ldots,T$, then for any $\alpha > 0 $,
\begin{align*}
\mathbb{P} \{ Y_T - Y_0 \geq \alpha \} \leq 2 e^{-\frac{\alpha^2}{2 \sum_{t=1}^T c_t^2}}.
\end{align*}
\end{proposition}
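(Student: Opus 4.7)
The plan decomposes $R_T = \sum_{t=1}^T |x_t^T\theta_* - K|$ into a ``good-event'' portion and a ``bad-geometric-event'' portion, and adapts the Abeille--Lazaric Thompson-sampling analysis to the leveling objective. Let $E = \hat{E} \cap \tilde{E}$, which by Propositions~\ref{p:p2}--\ref{p:p3} holds with probability at least $1 - \delta/2$, and let $\tau = \{t : \neg D_t\}$. On $\tau$, every played action is safe on $\hat{E}$, so $|x_t^T \theta_* - K| \leq C_2 - C_1 = 2C$, and applying Lemma~\ref{l:badevent_paper} produces exactly the third term of the theorem via $2C \cdot |\tau|$. The first two terms then come from bounding $\sum_{t \notin \tau} |x_t^T \theta_* - K|$ on $E$ together with a high-probability Azuma event, each contributing failure probability at most $\delta/4$.

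For $t \in [T]\setminus \tau$ on $E$, I would split
\begin{align*}
|x_t^T \theta_* - K| \leq \underbrace{|x_t^T(\theta_* - \tilde{\theta}_t)|}_{R^{\text{RLS}}_t} + \underbrace{|x_t^T \tilde{\theta}_t - K|}_{H_t}.
\end{align*}
The RLS piece uses $\|\theta_* - \tilde{\theta}_t\|_{V_t} \leq \beta_t + \gamma_t$ (triangle inequality through $\hat{\theta}_t$) and Cauchy--Schwarz to give $R^{\text{RLS}}_t \leq (\beta_t+\gamma_t)\|x_t\|_{V_t^{-1}}$, and Proposition~\ref{p:p4} turns the sum into $(\beta_T+\gamma_T)\sqrt{2Td\log(1+TL^2/(d\lambda))}$. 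For $H_t$, I would apply the resampling trick: for an independent draw $\tilde{\theta}'_t \sim D^{TS}$, Definition~\ref{def:def1}(1) guarantees $\mathbb{P}\{\tilde{\theta}'_t \in \tilde{\Theta}^{opt}_t \cap \mathcal{E}_t \mid \mathcal{F}_{t-1}, E_t, D_t\} > p$. On this event, the safe-set action $x(\tilde{\theta}'_t) \in \hat{\mathcal{X}}^s_t$ that the algorithm would have picked under $\tilde{\theta}'_t$ satisfies $x(\tilde{\theta}'_t)^T \tilde{\theta}'_t = K$ (Assumption~\ref{a:a3} combined with the definition of $\tilde{\Theta}^{opt}_t$). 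Since $x_t$ minimizes $|x^T\tilde{\theta}_t - K|$ over a set containing $x(\tilde{\theta}'_t)$,
\begin{align*}
H_t \leq |x(\tilde{\theta}'_t)^T \tilde{\theta}_t - x(\tilde{\theta}'_t)^T \tilde{\theta}'_t| \leq 2\gamma_t \|x(\tilde{\theta}'_t)\|_{V_t^{-1}},
\end{align*}
using $\|\tilde{\theta}_t - \tilde{\theta}'_t\|_{V_t} \leq 2\gamma_t$ on $\mathcal{E}_t$. Taking the conditional expectation over $\tilde{\theta}'_t$, dividing by the $p$-probability lower bound, and using that $\tilde{\theta}_t, \tilde{\theta}'_t$ are identically distributed given $\mathcal{F}_{t-1}$ yields $H_t \leq (2\gamma_t/p)\,\mathbb{E}[\|x_t\|_{V_t^{-1}} \mid \mathcal{F}_{t-1}]$.

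The final step converts $\sum_t \mathbb{E}[\|x_t\|_{V_t^{-1}} \mid \mathcal{F}_{t-1}]$ back to $\sum_t \|x_t\|_{V_t^{-1}}$ via Azuma (Proposition~\ref{p:p5}) applied to the bounded martingale-difference sequence $M_t = \|x_t\|_{V_t^{-1}} - \mathbb{E}[\|x_t\|_{V_t^{-1}} \mid \mathcal{F}_{t-1}]$, whose increments are bounded by $c_t = 2L/\sqrt{\lambda}$ (since $\|x_t\|_{V_t^{-1}} \leq L/\sqrt{\lambda}$). Choosing failure probability $\delta/2$ gives a deviation of $\sqrt{8TL^2/\lambda\,\log(4/\delta)}$; multiplying by $2\gamma_T/p$ recovers the second term, and one more application of Proposition~\ref{p:p4} then produces the shared $\sqrt{2Td\log(1+TL^2/(d\lambda))}$ factor appearing in the first term with the combined coefficient $\beta_T + (1 + 2/p)\gamma_T$. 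A union bound over $E$, the Azuma event, and the $\neg D_t$ rounds delivers overall failure probability at most $\delta$. The main obstacle I anticipate is the optimistic-resampling bound itself: in the leveling setting, showing that $\tilde{\theta}'_t \in \tilde{\Theta}^{opt}_t \cap \mathcal{E}_t$ implies the target-achieving action under $\tilde{\theta}'_t$ actually lies in $\hat{\mathcal{X}}^s_t$ requires combining Assumption~\ref{a:a3} with the uniform confidence-tube control from $D_t$, which is where the argument genuinely departs from the classical maximization proof in \citet{abeille2017linear}.
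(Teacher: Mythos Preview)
Your proposal does not address the stated result. Proposition~\ref{p:p5} is Azuma's concentration inequality for bounded-increment super-martingales; the paper simply recalls it as a standard fact and gives no proof. What you have written is instead a sketch of the proof of Theorem~\ref{thm:thm1} (the cumulative-regret bound for SALE-LTS), which \emph{invokes} Proposition~\ref{p:p5} at one step but is an entirely different statement. Your sketch of that theorem is essentially the paper's own argument (same decomposition into $R_{T,1}$ and $R_{T,2}$, same $D_t$/$\neg D_t$ split, same optimistic-resampling trick, same Azuma step), but it is not what was asked.

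A correct proof of Proposition~\ref{p:p5} is the classical Azuma--Hoeffding argument: for each $t$, the bounded-difference and super-martingale properties give the conditional moment-generating-function bound $\mathbb{E}\bigl[e^{s(Y_t - Y_{t-1})} \mid \mathcal{F}_{t-1}\bigr] \leq e^{s^2 c_t^2 / 2}$ for $s>0$ (Hoeffding's lemma applied to the centered increment, using that its conditional mean is nonpositive); iterating over $t$ yields $\mathbb{E}\bigl[e^{s(Y_T - Y_0)}\bigr] \leq \exp\bigl(\tfrac{s^2}{2}\sum_{t=1}^T c_t^2\bigr)$, and Markov's inequality plus the optimal choice $s = \alpha / \sum_t c_t^2$ gives the stated tail bound. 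The paper omits this because it is textbook material.
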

\subsection{Bounding the Regret}
We assume that the event $E$ holds throughout the analysis and consider the following decomposition of the instantaneous regret in \eqref{eqn:regret},
\begin{align*}
r_t &\coloneqq \abs{x_t^T \theta_* - K} = \abs{x_t^T \theta_* - x_t^T \tilde{\theta}_t + x_t^T \tilde{\theta}_t - K} \nonumber \\
&\leq \underbrace{\abs{x_t^T \theta_* - x_t^T \tilde{\theta}_t}}_{\text{Term 1},~r_{t,1}} +  \underbrace{\abs{x_t^T \tilde{\theta}_t - K}}_{\text{Term 2},~r_{t,2}},
\end{align*}
\paragraph{Bounding Term 1}
Consider the following for $r_{t,1}$,
\begin{align}
r_{t,1} &= \abs{x_t^T \theta_* - x_t^T \hat{\theta}_t + x_t^T \hat{\theta}_t - x_t^T \tilde{\theta}_t} \nonumber \\
&\leq \abs{x_t^T \theta_* - x_t^T \hat{\theta}_t } + \abs{ x_t^T \hat{\theta}_t - x_t^T \tilde{\theta}_t} \nonumber \\
&\leq \| x_t \|_{V_t^{-1}} \| \theta_* - \hat{\theta}_t \|_{V_t} + \| x_t \|_{V_t^{-1}} \| \hat{\theta_t} - \tilde{\theta}_t \|_{V_t}  \label{eqn:bt1_2} \\
&\leq \| x_t \|_{V_t^{-1}} \beta_T + \| x_t \|_{V_t^{-1}} \gamma_T = (\beta_T + \gamma_T) \| x_t \|_{V_t^{-1}}, \label{eqn:bt1_3}
\end{align}
where \eqref{eqn:bt1_2} holds by Cauchy-Schwarz inequality, and \eqref{eqn:bt1_3} follows since the event $E_t$ holds and $(\beta_t)_t$ and $(\gamma_t)_t$ are increasing sequences. Then, by Proposition~\ref{p:p4}, we have,
\begin{align*}
R_{T,1} \leq (\beta_T + \gamma_T) \sqrt{2Td \log \big( 1 + \frac{T L^2}{d \lambda} \big)}. 
\end{align*}
\paragraph{Bounding Term 2}
Consider the following for $r_{t,2}$,
\begin{align*}
R_{T,2} = \underbrace{\sum_{t=1}^T r_{t,2} \mathbb{I} \{ \neg D_t, E_t \}}_{R_{T,2}^1} + \underbrace{\sum_{t=1}^T r_{t,2} \mathbb{I} \{D_t, E_t \}}_{R_{T,2}^2}.
\end{align*}
$R_{T,2}^1 \leq \frac{6 C d}{\log \big( 1 + (\frac{G} {2(\beta_T + \gamma_T)})^2 \big) } \log \left( 1+ \frac{ L^2}{  \lambda \log \big( 1 + (\frac{G} {2(\beta_T + \gamma_T)})^2 \big) } \right)$ by Lemma~\ref{l:badevent_paper}, and since the worst case regret is upper-bounded by $2 C = C_2 - C_1$ with high probability ($1-\delta$) by \eqref{eqn:proxysafe} and \eqref{eqn:acqustrat}. Even in the worst case our algorithm plays safe actions (with $1-\delta$ probability), i.e. the output is between $C_2$ and $C_1$ in every round. Therefore, the regret incurred in any round is upper bounded by $C_2-C_1$. We can then proceed to bound $R_{T,2}^2$.
Let $X_t(\theta) \coloneqq \argmin_{x \in \hat{\mathcal{X}}^s_t \cup \mathcal{X}^{iss}_t} \abs{x^T\theta - K}$ denote the set of optimal \textit{safe} pseudo-actions at round $t$ with respect to $\theta$ ($X_t (\theta) \subset \mathcal{X}_t$), and $x_t(\theta) \in X_t(\theta)$.
\begin{align*}
r_{t,2} &= \abs{x_t^T \tilde{\theta}_t - K} \leq \sup_{\theta \in \mathcal{E}_t} \abs{x_t(\theta)^T \theta -K},
\end{align*}
since $\tilde{\theta}_t \in \mathcal{E}_t$ when the event $\tilde{E}_t$ holds. We can also bound $r_{t,2}$ by the expectation of the right-hand side over any random choice of $\tilde{\theta}_t \in \tilde{\Theta}^{opt}_t \cap \mathcal{E}_t$ as (let $\mathcal{S}_t \coloneqq \tilde{\Theta}^{opt}_t \cap \mathcal{E}_t$),
\begin{align}
&r_{t,2} \mathbb{I} \{D_t, E_t \} \nonumber \\
&\leq \mathbb{E} \left[ \sup_{\theta \in \mathcal{E}_t} \abs{x_t(\theta)^T \theta - K} \mathbb{I} \{D_t, E_t \}  \: \: \middle| \mathcal{F}_t, \tilde{\theta}_t \in \mathcal{S}_t \right] \nonumber \\
&= \mathbb{E} \left[ \sup_{\theta \in \mathcal{E}_t} \abs{x_t(\theta)^T \theta - x_t ^T \tilde{\theta}_t} \: \: \middle| \mathcal{F}_t, \tilde{\theta}_t \in \mathcal{S}_t, D_t, E_t \right] \mathbb{P} \{D_t, E_t \} \label{eqn:bt2_4} \\
&\leq \mathbb{E} \left[ \sup_{\theta \in \mathcal{E}_t} \abs{x_t^T \theta - x_t ^T \tilde{\theta}_t} \: \: \middle| \mathcal{F}_t, \tilde{\theta}_t \in \mathcal{S}_t, D_t, E_t \right] \mathbb{P} \{D_t, E_t \} \label{eqn:bt2_3} \\
&= \mathbb{E} \left[ \sup_{\theta \in \mathcal{E}_t} \abs{x_t^T ( \tilde{\theta}_t - \theta)} \: \: \middle| \mathcal{F}_t, \tilde{\theta}_t \in \mathcal{S}_t, D_t, E_t \right] \mathbb{P} \{D_t, E_t \} \nonumber \\
&\leq \mathbb{E} \left[ \lVert x_t \rVert_{V_t^{-1}} \sup_{\theta \in \mathcal{E}_t} \lVert \tilde{\theta}_t - \theta \rVert_{V_t} \: \: \middle| \mathcal{F}_t, \tilde{\theta}_t \in \mathcal{S}_t, D_t, E_t  \right] \mathbb{P} \{D_t, E_t \} \label{eqn:bt2_5} \\
&\leq \mathbb{E} \left[  \lVert x_t \rVert_{V_t^{-1}} \left( \lVert \tilde{\theta}_t - \hat{\theta}_t \rVert_{V_t} + \sup_{\theta \in \mathcal{E}_t} \lVert \hat{\theta}_t - \theta \rVert_{V_t} \right) \: \: \middle| \mathcal{F}_t, \tilde{\theta}_t \in \mathcal{S}_t, D_t, E_t  \right] \mathbb{P} \{D_t, E_t \} \label{eqn:bt2_6} \\
&\leq \mathbb{E} \left[  \lVert x_t \rVert_{V_t^{-1}} 2 \gamma_t \: \: \middle| \mathcal{F}_t, \tilde{\theta}_t \in \mathcal{S}_t, D_t, E_t \right] \mathbb{P} \{D_t, E_t \}, \label{eqn:bt2_7}
\end{align}
where \eqref{eqn:bt2_4} holds since we have $x_t^T \tilde{\theta}_t = K$ given $\tilde{\theta}_t \in \tilde{\Theta}^{opt}_t$, and \eqref{eqn:bt2_3} follows from the definition of $X_t(\theta)$ and $x_t(\theta) \in X_t(\theta)$. \eqref{eqn:bt2_5} and \eqref{eqn:bt2_6} follow from Cauchy-Schwarz and triangle inequality, respectively, and \eqref{eqn:bt2_7} follows since the event $\tilde{E}_t$ holds.

Then, since we have $\mathbb{P} \{ \tilde{\theta}_t \in\mathcal{S}_t | D_t, E_t\} > p$,
\begin{align}
r_{t,2} \mathbb{I} \{D_t, E_t \} &p \leq \mathbb{E} \left[ 2 \gamma_t \lVert x_t \rVert_{V_t^{-1}}  \: \: \middle| \mathcal{F}_t, \tilde{\theta}_t \in \mathcal{S}_t, D_t, E_t \right] \nonumber \\
& \hspace{10pt} \times \mathbb{P} \{ \tilde{\theta}_t \in \mathcal{S}_t | D_t, E_t \} \mathbb{P} \{D_t, E_t\} \nonumber \\
&\leq 2 \gamma_t \mathbb{E} \left[ \lVert x_t \rVert_{V_t^{-1}} \middle| \mathcal{F}_t, D_t, E_t \right] \mathbb{P} \{D_t, E_t \} \label{eqn:btp2_1}   \\
&=2 \gamma_t \mathbb{E} \left[ \lVert x_t \rVert_{V_t^{-1}} \mathbb{I} \{ D_t, E_t \} \middle| \mathcal{F}_t \right], \nonumber
\end{align}
where \eqref{eqn:btp2_1} holds since $\lVert x_t \rVert_{V_t^{-1}}$ is non-negative. Then we have,
\begin{align*}
R_{T,2}^2 &\leq \frac{2 \gamma_T}{p} \bigg( \underbrace{\sum_{t=1}^T \| x_t \|_{V_t^{-1}}}_{R_{T,2}^{2,1}} + \underbrace{\sum_{t=1}^T \Big(\mathbb{E} \left[ \lVert x_t \rVert_{V_t^{-1}} \mid \mathcal{F}_t \right] - \| x_t \|_{V_t^{-1}}\Big)}_{R_{T,2}^{2,2}}  \bigg),
\end{align*}
where $R_{T,2}^{2,1} \leq \sqrt{2 T d \log(1+\frac{T L^2}{d \lambda})}$ by Proposition~\ref{p:p4}. Since $(R_{t,2}^{2,2})_{t=1}^T$ is a martingale by construction, we can bound it by Azuma's inequality in Proposition~\ref{p:p5}. Since $\| x_t\|_{2} \leq L$, and $V_t^{-1} \leq \frac{1}{\lambda} I$, we have $\mathbb{E} \left[ \lVert x_t \rVert_{V_t^{-1}} \mid \mathcal{F}_t \right] - \| x_t \|_{V_t^{-1}} \leq \frac{2 L}{\sqrt{\lambda}}$ for all $t \geq 1$. Then, we can bound $R_{T,2}^{2,2}$ with at least $1-\delta/2$ probability as,
\begin{align*} 
R_{T,2}^{2,2} \leq \sqrt{\frac{8 T L^2}{\lambda} \log \frac{4}{\delta}}.
\end{align*}
Both event $E$ and Azuma's inequality hold with at least $1- \delta/2$ probability. Finally, the upper bound on the cumulative regret is obtained after observing that $R_T = R_{T,1} + R_{T,2}^1 + R_{T,2}^2$.
\section{Experiments} \label{sec:exp}
\subsection{The Simulator} \label{ssec:thesim}
Clinical experimentation with real patients is risky, and it runs into ethical challenges \cite{chen2020ethical, vayena2018machine, price2017regulating}. Dose-finding studies for clinical trials either test their methods through synthetic simulations or use exogenous algorithms to learn a dose-response model as the ground truth from real-world data when the patient group is homogeneous \cite{aziz2021multi, shen2020learning, lee2020contextual}. The latter strategy is not applicable in our case since we aim to learn individualized dose-response models. We conduct \textit{in~silico} experiments using the University of Virginia (UVa)/PADOVA T1DM simulator \cite{kovatchev2009silico, simglucose2018}. It simulates the patients' postprandial blood glucose (PPBG) levels for given meal events (i.e., carbohydrate intake and fasting blood glucose level) and bolus insulin dose pairs via a complex model using differential equations and patient characteristics. It is recognized by the United States Food and Drug Administration (U.S.\ FDA) as a reliable hormone controller design framework, and is the most frequently used framework in blood glucose control studies \cite{daskalaki2013personalized, zhu2020basal, zhu2020insulin, tejedor2020reinforcement}. We experiment with all  30 virtual patients that come with the simulator: 10 children, 10 adolescents, and 10 adults. We also compare the SALE-LTS' performance against clinicians' for five virtual patients to provide external validation and evaluate SALE-LTS' potential as a supplementary tool in clinical settings.
\begin{table*}[h]
\caption{The ``First Round'' row presents the performance metrics averaged over first dose recommendations for each (patient, meal event) pair. The ``Overall'' row presents the same metrics averaged over all dose recommendations. The ``Clinician Experiment'' row presents the performance metrics averaged over all dose recommendations for 5 patients and 20 (test) meal events in the experiment with the clinicians. The target PPBG level is 112.5~mg/dl ($K$), and the lower and the upper values for safe BG levels are 70~mg/dl ($C_1$) and 180~mg/dl ($C_2$), respectively.}
\label{table:results}
\centering
\small
\setlength\tabcolsep{3pt}
\begin{tabular}{clccccc@{\hspace{.2cm}}c@{\hspace{.2cm}}c}
\hline \addlinespace[0.02cm] &
\textbf{Algorithm}  & \textbf{PPBG} & \textbf{Safe Freq.} & \textbf{Hyper Freq.} & \textbf{Hypo Freq.} & \textbf{HBGI}  & \textbf{LBGI}  & \textbf{RI}  \\
\addlinespace[0.02cm]
\hline
\addlinespace[0.01cm]
\multicolumn{1}{c}{\multirow{3}{*}{\textbf{First Round}}} &
\multicolumn{1}{l}{Tuned Calc.} & 122.9$\pm$17.7 & 0.998 & 0.000 & 0.002 & 1.07 & 0.87 & 1.94 \\ \addlinespace[0.01cm] \multicolumn{1}{c}{} &
\multicolumn{1}{l}{LE-LTS} & 99.8$\pm$47.6 & 0.756 & 0.054 & 0.190 & 6.81 & 11.58 & 18.39 \\ \addlinespace[0.01cm] \multicolumn{1}{c}{} &
\multicolumn{1}{l}{SALE-LTS} & \textbf{120.75$\pm$15.31} &  \textbf{0.999} &  \textbf{0.000} &  \textbf{0.001} &  \textbf{0.75} &  \textbf{0.72} &  \textbf{1.47} \\ 
\addlinespace[0.01cm]
\hline
\addlinespace[0.02cm]
\multicolumn{1}{c}{\multirow{3}{*}{\textbf{Overall}}} &
\multicolumn{1}{l}{Tuned Calc.} & 123.0$\pm$17.5 & 0.99800 & \textbf{0.00000} & 0.00200 & 1.05 & 0.91 & 1.96 \\ \addlinespace[0.01cm] \multicolumn{1}{c}{} & 
\multicolumn{1}{l}{LE-LTS} & 110.05$\pm$19.8 & 0.96882 & 0.00859 & 0.02259 & 0.81 & 1.75 & 2.56 \\ \addlinespace[0.01cm] \multicolumn{1}{c}{} & 
\multicolumn{1}{l}{SALE-LTS} & \textbf{114.92$\pm$9.9}  & \textbf{0.99900} & 0.00007 & \textbf{0.00059} & \textbf{0.30} & \textbf{0.19} & \textbf{0.49} \\ 
\addlinespace[0.01cm]
\specialrule{.2em}{.1em}{.1em} 
\addlinespace[0.02cm]
\multicolumn{1}{c}{\multirow{3}{*}{\begin{tabular}[c]{@{}c@{}}\textbf{Clinician}\\ \textbf{Experiment}\end{tabular}}} &
\multicolumn{1}{l}{Calculator} & 158.2$\pm$27.5 & 0.85 & 0.15 & \textbf{0.00} & 4.684 & \textbf{0.001} & 4.685 \\ \addlinespace[0.01cm] \multicolumn{1}{c}{} & 
\multicolumn{1}{l}{Clinicians} & 176.4$\pm$34.6 & 0.59 & 0.41 & \textbf{0.00} & 7.876 & 0.002 & 7.878 \\ \addlinespace[0.01cm] \multicolumn{1}{c}{} & 
\multicolumn{1}{l}{SALE-LTS} & \textbf{146.8$\pm$25.7}  & \textbf{0.92} & \textbf{0.08} & \textbf{0.00} & \textbf{3.077} & 0.046 & \textbf{3.123} \\ 
\hline
\addlinespace[0.01cm]
\end{tabular}
\end{table*} 
\subsection{Performance Metrics} \label{ssec:permet}
Patients experience hypoglycemia (hypo) or hyperglycemia (hyper) events when their blood glucose (BG) levels fall below 70 mg/dl or go beyond 180 mg/dl, respectively. These events can have severe adverse effects and may lead to hospitalization or even death \cite{bastaki2005diabetes}. We set the target BG level to 112.5 mg/dl \cite{kovatchev2003algorithmic}. Our objective is to optimize the course of treatment for each patient by recommending doses that lead to PPBG levels close to the target BG level (i.e.,\ minimizing the regret) while avoiding the hypo and hyper events. A well-balanced algorithm should incur a low cumulative regret together with low hypo and hyper frequencies (error frequencies) and risk indices. Risk indices are defined as low blood glycemic index (LBGI), high blood glycemic index (HBGI), and risk index (RI $\coloneqq$ LBGI + HBGI), and they characterize the risk of hypo and hyper events in the long term \cite{kovatchev2003algorithmic}.
\subsection{Simulation Procedure} \label{ssec:simpro}
We learn individualized models for each patient. First, we create 30 different meal events. A meal event is a (carbohydrate intake, fasting blood glucose) pair (i.e., the context vector $z_t$). We sample carbohydrate intake for each meal event from $[20,~80]$~g, and fasting blood glucose from $[100,~150]$~mg/dl, both uniformly. We use the same 30 meal events for each patient. Next, we make consecutive bolus insulin dose recommendations (i.e., the action $a_t$) to patients for these meal events in a round-robin fashion. To be precise, after making a recommendation for a meal event, the model makes recommendations for the other 29 meal events, and it updates its parameters between each consecutive dose recommendation upon observing the outcomes (PPBG). We run a total of 15 rounds. That is, we make 15 different recommendations for each meal event and 450 (30 $\times$ 15) total for each patient. The motivation behind this procedure is two-fold: \textbf{(i)} it includes a variety of meal events similar to a patient's routine diet plan. \textbf{(ii)} it tests whether observations related to different meal events help improve the dose recommendations for other meal events (i.e.,\ inter-contextual information transfer) thanks to the round-robin strategy. We simulate both the original (SALE-LTS) and the \textit{unsafe} version (LE-LTS) of our algorithm to justify the safety mechanism's effectiveness further. Finally, we average the results from all 30 runs (one for each patient).
\subsection{Bolus Insulin Dose Calculators} \label{ssec:dosecalc}
Formula-based calculators are the \textit{de facto} tools for insulin dose recommendation in diabetes care \cite{walsh2011guidelines}. They make recommendations via simple calculations using carbohydrate intake, fasting blood glucose, target blood glucose, and patient characteristics such as carbohydrate factor (CRF) and correction factor (CF). Calculators must be fine-tuned (e.g., CRF and CF parameters must be accurate) for each patient to be safe and effective, which is a challenging task. Even when fine-tuned, they may discount some patient characteristics which affect the PPBG. \textit{Correction~doses} constitute 9\% of the patients' daily insulin dose intake due to the calculator's failure \cite{walsh2011guidelines}. We tune the calculator parameters for each patient to ensure it makes safe recommendations more than 99\% of the time overall. Then, we use the recommendations by the tuned calculator (tuned calc.) to initialize the safe dose set (i.e., $\mathcal{X}^{iss}_t$) for each (patient, meal event) pair. 
\subsection{Discussion of Results} \label{ssec:dor}
\paragraph{Safety} \label{sssec:safety} Assuring patient safety while optimizing the course of treatment is the most pivotal task. One could expect that to be particularly challenging at the beginning (i.e., exploration phase) since there is not much information to learn and infer from. Therefore, we probe the algorithms' safety statistics in the first round (i.e., first recommendations for each 30 meal events for each patient). Table~\ref{table:results} shows that SALE-LTS ensures patient safety almost perfectly, while LE-LTS (the \textit{unsafe} version) incurs a significantly higher error frequency of 0.244 (0.054 $+$ 0.190) in the first round. Moreover, we observe that LE-LTS yields alarmingly high risk index values of 6.81 and 11.58 for HBGI and LBGI, respectively. This observation suggests that LE-LTS not only makes recommendations that result outside of the safe region ((70,~180)~mg/dl) but also dangerously away from it. We confirm that presumption after viewing Figure~\ref{fig:boxplot_init} which shows that SALE-LTS behaves more cautiously in the beginning phase, and it does not significantly deviate from the recommendations provided by the tuned calculator. However, it still yields better error frequencies, risk index values, and PPBG distribution than the tuned calculator. That suggests the information gained after making a recommendation for a meal event helps improve the recommendations for other meal events.
\begin{figure*}[!t]
\vspace{-10pt}
    \centering
    \begin{minipage}[t]{.32\linewidth}
    \captionsetup{width=.9\linewidth}
        \centering
        \includegraphics[width=0.98\linewidth]{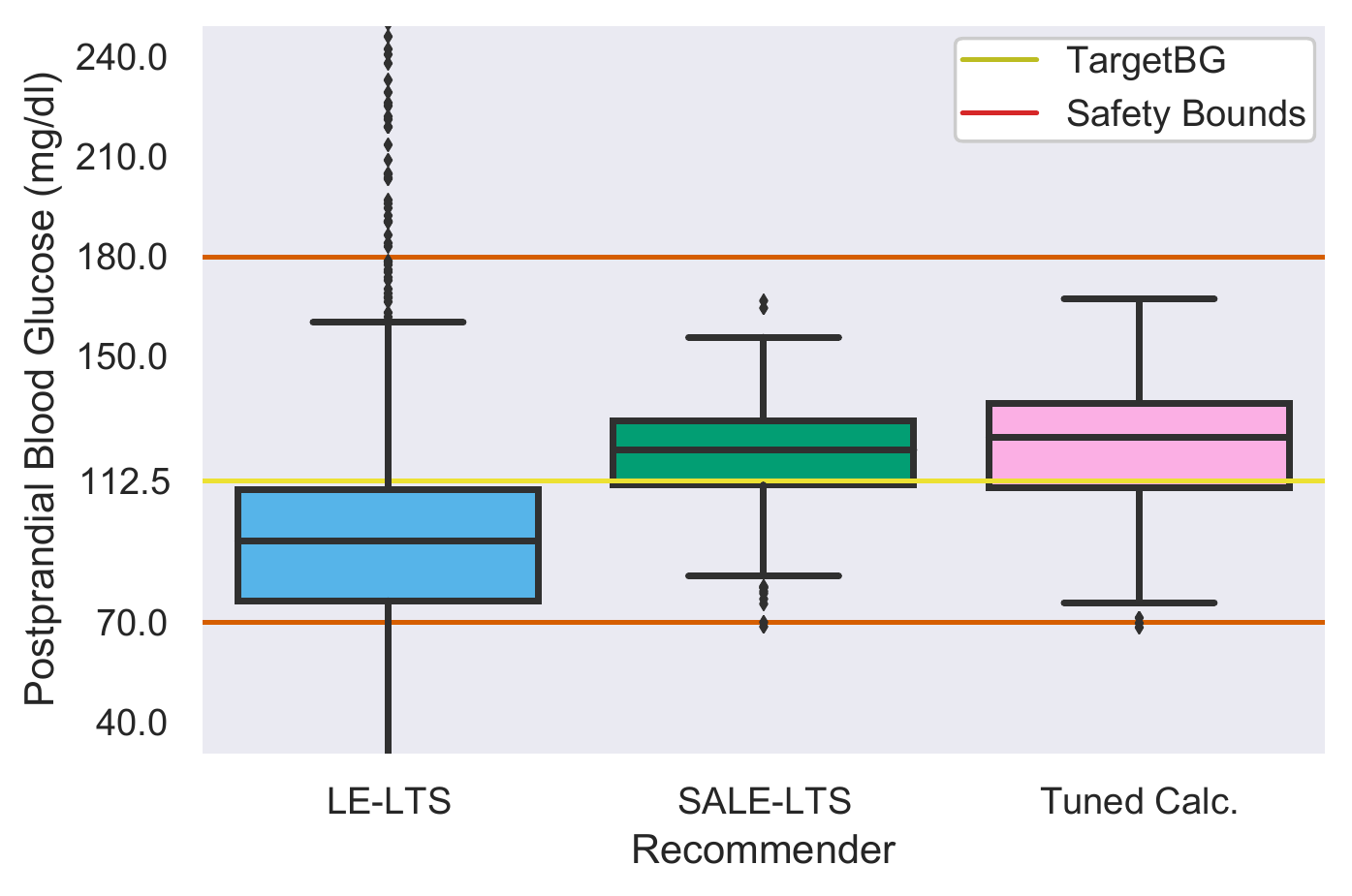}
        \caption{PPBG distributions over all patients after the \textit{first round} of recommendations is completed.}
        \label{fig:boxplot_init}
    \end{minipage}%
    \begin{minipage}[t]{.32\linewidth}
    \captionsetup{width=.9\linewidth}
        \centering
        \includegraphics[width=0.98\linewidth]{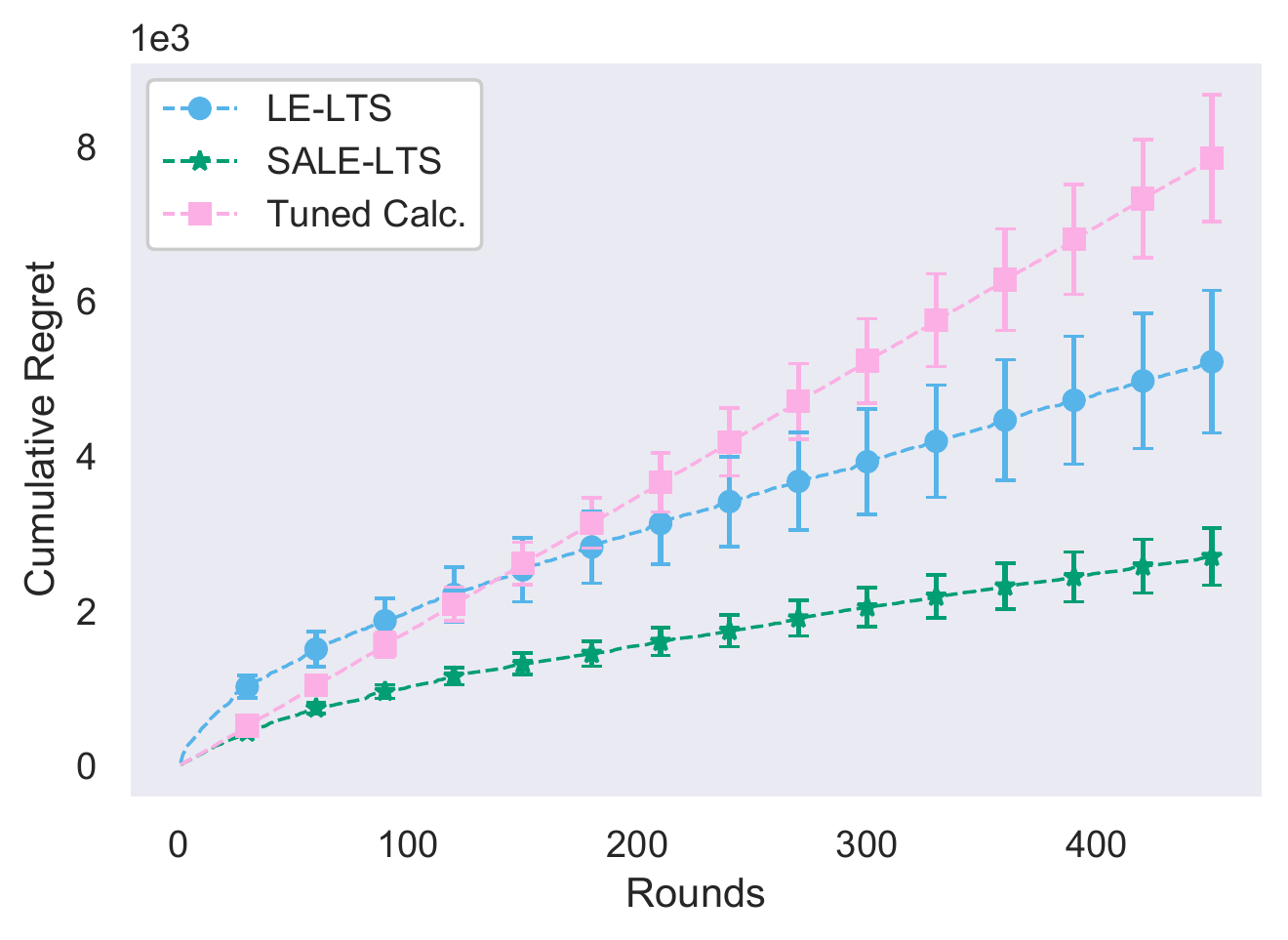}
        \caption{Cumulative regrets averaged over all recommendations made to all (patient, meal event) pairs.}
        \label{fig:regret}
    \end{minipage}%
    \begin{minipage}[t]{.32\linewidth}
    \captionsetup{width=.9\linewidth}
	\centering
	\includegraphics[width=0.98\linewidth]{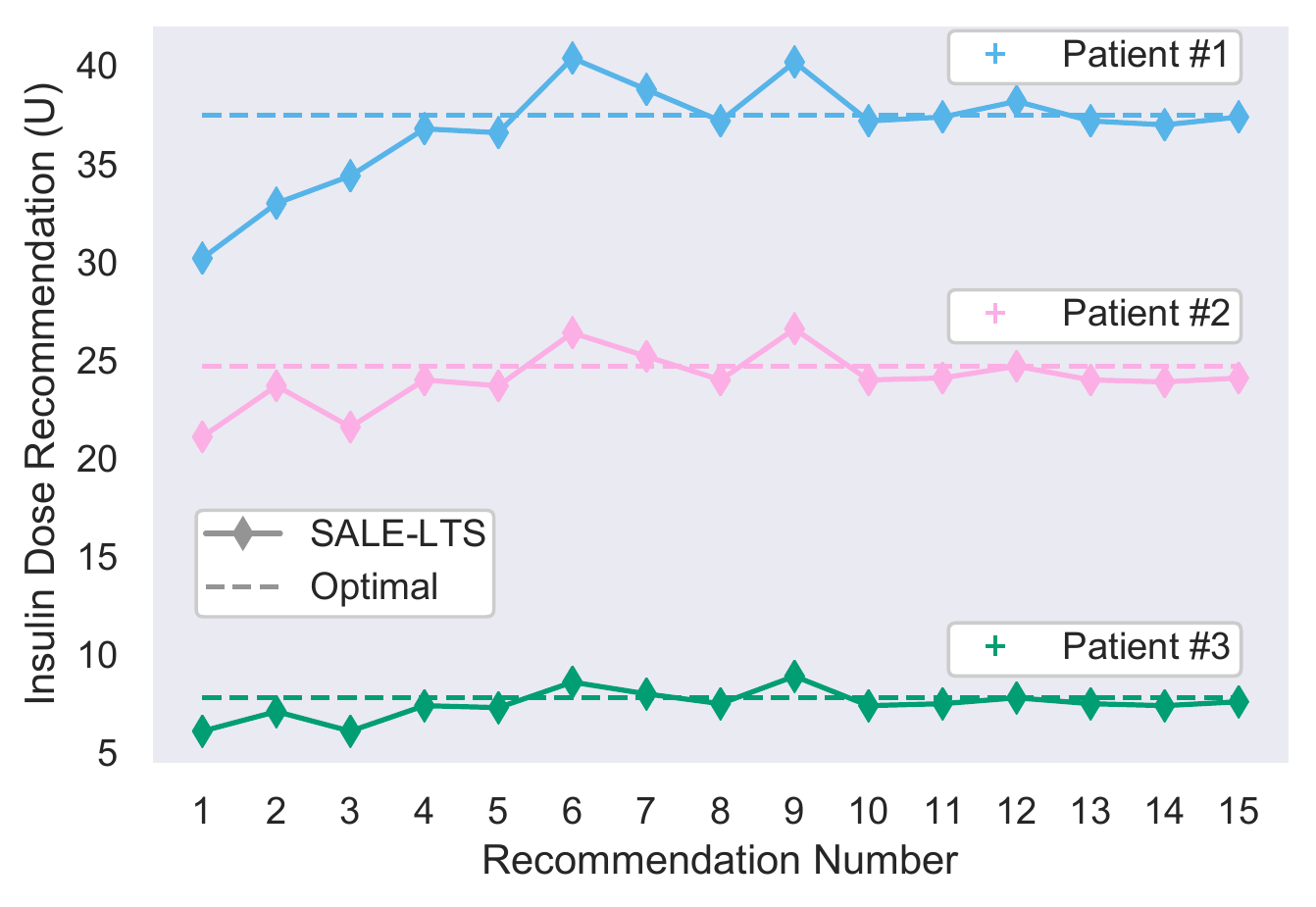}
	\caption{Consecutive and optimal recommendations for three different patients for the same meal event.}
	\label{fig:comparison}
\end{minipage}%
    \vspace{-15pt}
\end{figure*} 
\paragraph{Regret} \label{sssec:regret} The objective is to minimize regret while satisfying the safety constraints. By \eqref{eqn:regret}, this translates to recommending doses that lead to PPBG values close to the target BG level. Table~\ref{table:results} (the ``Overall'' row) and Figure~\ref{fig:regret} shows that both SALE-LTS and LE-LTS significantly outperforms the tuned calculator. Moreover, Figure~\ref{fig:regret} signifies another crucial advantage of the safety mechanism. At the initial exploration phase ($\approx$100 rounds), LE-LTS incurs the highest cumulative regret. Meanwhile, SALE-LTS takes advantage of having access to the extra information provided by the tuned calculator (i.e., the initial safe set, $\mathcal{X}^{iss}_t$) and yields the lowest cumulative regret.
\paragraph{Inter-Contextual Information Transfer}  \label{sssec:icit} A patient's routine diet plan naturally consists of various meals. Therefore, it is crucial to test an algorithm's ability to leverage the information gained from a meal event (i.e., context) for another one. We have briefly discussed SALE-LTS' ability to perform inter-contextual information transfer towards the end of the \textbf{Safety} section. Figure~\ref{fig:comparison} shows consecutive dose recommendations for a fixed meal event for three different patients. Remember that due to the round-robin strategy we use in our simulations, the algorithm makes recommendations for other 29 meal events between two consecutive recommendations for the same meal event. We observe that these recommendations for other meal events help SALE-LTS improve its recommendations for the fixed meal event and converge to the optimal dose recommendation for each patient.
\begin{wrapfigure}{R}{0.32\textwidth}
    \vspace{-15pt}
\begin{minipage}{\linewidth}
    \captionsetup{width=.9\linewidth}
	\centering
	\includegraphics[width=0.98\linewidth]{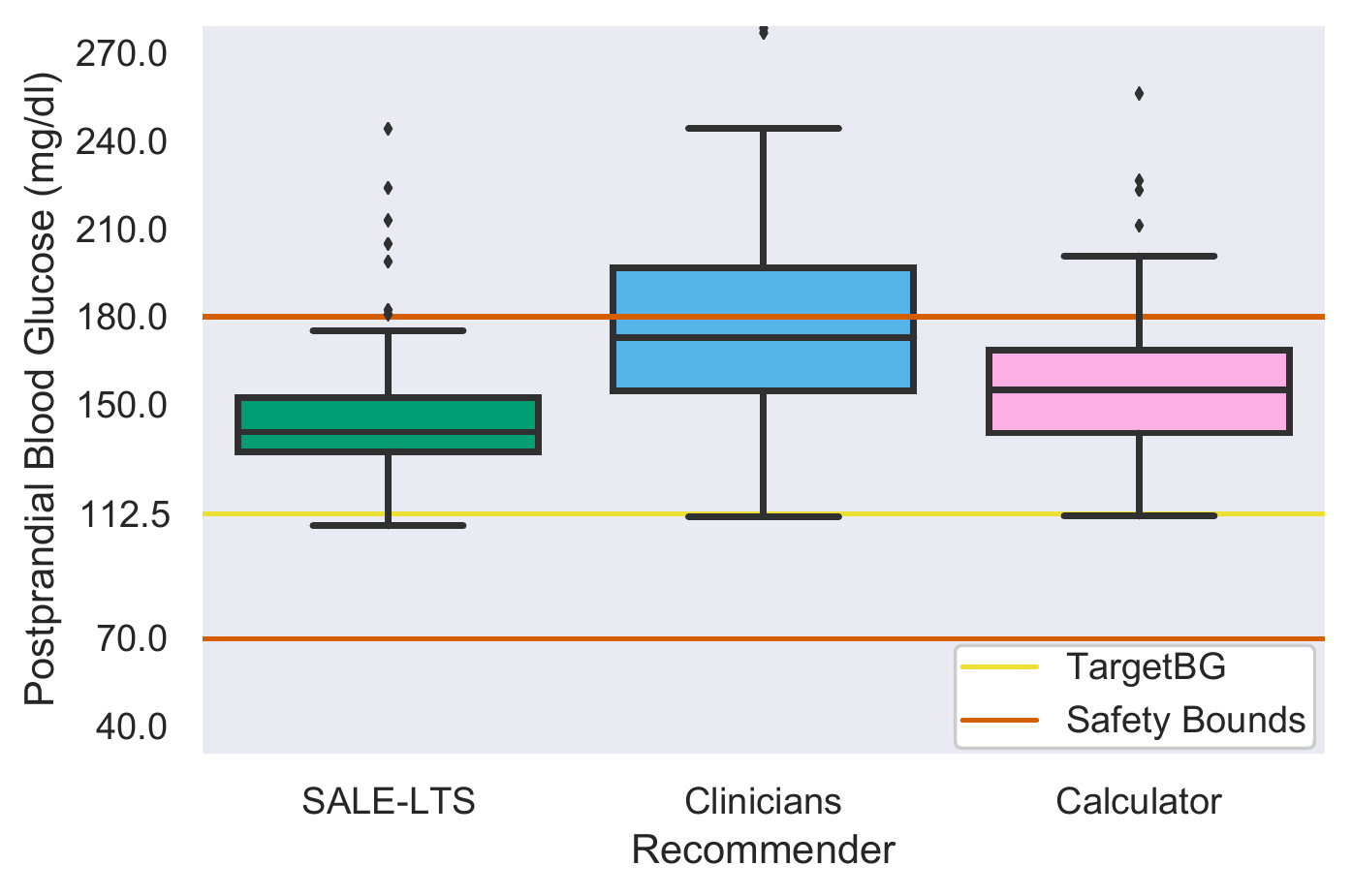}
	\caption{PPBG distributions in clinician experiment with five virtual patients.}
	\label{fig:doc_comp}
\end{minipage}%
    \vspace{-10pt}
\end{wrapfigure}
\paragraph{Comparison Against Clinicians}  \label{sssec:cac} In our best effort to provide external validation, we compare SALE-LTS' performance against clinicians' for five virtual patients. We do not use fine-tuned calculators to complicate the task further. For each patient, we provide clinicians with 20 training samples in the form of (meal event, insulin dose, PPBG measurement) and ask them to make dose recommendations for 20 new \textit{unseen} test scenarios. We provide SALE-LTS with the same 20 training samples and query recommendations for the same 20 test samples. We do not perform any updates to the model using the observations related to the test samples. Figure~\ref{fig:doc_comp} and Table~\ref{table:results} show that SALE-LTS significantly outperforms the calculator and the clinicians both in terms of the safety and the PPBG distributions (i.e., regret). Even though this is a small experiment conducted on virtual patients, it still showcases that making inferences about a patient's dose-response model is not trivial at all. SALE-LTS is demonstrably successful in optimizing the course of treatment while ensuring patient safety, and it is promising as a supplementary tool in clinical settings.
\section{Concluding Remarks} \label{sec:conc}
We introduced the \textit{leveling bandit} problem, which is somewhat overlooked yet relevant in various domains, particularly in healthcare. We considered a two-sided safety constraint that naturally emerged along with the leveling problem and proposed SALE-LTS, a TS-based algorithm for the linear contextual stochastic bandit problem. We derived an upper bound on its regret and showed that it matches the original LTS algorithm while respecting the safety constraint. We demonstrated SALE-LTS' effectiveness against the baselines via detailed \textit{in silico} experiments for bolus insulin dose recommendation problem in T1DM disease.
\section*{Acknowledgements}
This study was supported in part by the Scientific and Technological Research Council of Turkey under Grant 215E342. Ilker Demirel is supported by Vodafone within the framework of 5G and Beyond Joint Graduate Support Programme coordinated by Information and Communication Techonologies Authority.

\appendix
\section{Derivation of Equation \eqref{eqn:proxysafealt} from \eqref{eqn:proxysafe}} \label{sec:appa}
At each round $t \in [T]$, the confidence region around the RLS-estimate $\hat{\theta}_t$ is formed as,
\begin{align*}
\mathcal{C}_t \coloneqq \{ \theta \in \mathbb{R}^{d} : \lVert \theta - \hat{\theta}_t \rVert_{V_t} \leq \beta_t \},
\end{align*}
and the proxy set of safe pseudo-actions is then obtained as,
\begin{align*}
\hat{\mathcal{X}}^s_t &= \{ x \in \mathcal{X}_t : C_1 \leq x^T \theta \leq C_2,~\forall \theta \in \mathcal{C}_t \} \\
&=\{ x \in \mathcal{X}_t : C_1 \leq \min_{\theta \in \mathcal{C}_t} x^T \theta,~\max_{\theta \in \mathcal{C}_t} x^T \theta \leq C_2 \}. 
\end{align*}
Note that $\forall \theta \in \mathcal{C}_t$ and $\forall x \in \mathcal{X}_t$,
\begin{align}
\abs{x^T (\theta - \hat{\theta}_t)} &\leq \| x \|_{V_t^{-1}} \| \theta - \hat{\theta}_t \|_{V_t} \label{eqn:APA1} \\
&\leq \| x \|_{V_t^{-1}} \beta_t, \label{eqn:APA2}
\end{align}
where \eqref{eqn:APA1} follows from Cauchy-Schwarz inequality and \eqref{eqn:APA2} is due to the definition of $\mathcal{C}_t$. Then, $\forall  \theta \in \mathcal{C}_t$ and $\forall x \in \mathcal{X}_t$, we have,
\begin{align}\label{eqn:APA3}
x^T \hat{\theta}_t - \beta_t \| x \|_{V_t^{-1}} \leq x^T \theta \leq  x^T \hat{\theta}_t + \beta_t \| x \|_{V_t^{-1}}.
\end{align}
For any $\hat{x} \in \mathcal{X}_t$, there exists $\hat{\theta} \in \mathcal{C}_t$ such that $\hat{x} = \alpha (\hat{\theta} - \hat{\theta}_t)$ for some constant $\alpha$ since $ \mathcal{C}_t$ is an ellipsoid centered at  $\hat{\theta}_t$. In that case, Cauchy-Schwarz implies an equality and \eqref{eqn:APA2} is equivalent to,
\begin{align} \label{eqn:APA6}
\abs{\hat{x}^T (\hat{\theta} - \hat{\theta}_t)} = \| \hat{x} \|_{V_t^{-1}} \beta_t.
\end{align}
Then, for any $x \in \mathcal{X}_t$, there exists $\theta_1, \theta_2 \in \mathcal{C}_t$ such that,
\begin{align}
x^T \theta_1 &= x^T \hat{\theta}_t + \| x \|_{V_t^{-1}} \beta_t \label{eqn:APA4} \\
x^T \theta_2 &= x^T \hat{\theta}_t -\| x \|_{V_t^{-1}} \beta_t, \label{eqn:APA5}
\end{align}
that is, the lower and upper bounds in \eqref{eqn:APA3} for any $x \in \mathcal{X}_t$ is satisfied for some $\theta_1, \theta_2 \in \mathcal{C}_t$. We then have,
\begin{align*}
\min_{\theta \in \mathcal{C}_t} x^T\theta &= x^T \hat{\theta}_t - \beta_t \| x \|_{V_t^{-1}} \\
\max_{\theta \in \mathcal{C}_t} x^T\theta &= x^T \hat{\theta}_t + \beta_t \| x \|_{V_t^{-1}},
\end{align*} 
and we can insert $x^T \hat{\theta}_t - \beta_t \| x \|_{V_t^{-1}}$ and $x^T \hat{\theta}_t + \beta_t \| x \|_{V_t^{-1}}$ as $\min$ and $\max$ values in \eqref{eqn:proxysafealt0} to obtain \eqref{eqn:proxysafealt}.

\section{Proof of Lemma~\ref{l:badevent_paper}} \label{sec:appb}
Let us first remember the definition of the event $D_t$ and restate the Lemma~\ref{l:badevent_paper}.
\begin{align*}
D_t = \Big\{ \max_{x \in \mathcal{X}} \|x \|_{V_t^{-1}} < \frac{G}{2(\beta_T + \gamma_T)} \Big\},
\end{align*}
where $G = \min \{\frac{C_2 - C_1}{2}, \epsilon L\}$.
\paragraph{Lemma 1.} \textit{Let $\tau \subset [T]$ denote the set of rounds where $\mathbb{I} \{ \neg D_t \} = 1$ for $t \in \tau$. We have,}
\begin{align*}
 \abs{\tau} \leq  \frac{3d}{\log \big( 1 + (\frac{G} {2(\beta_T + \gamma_T)})^2 \big) } \log \left( 1+ \frac{ L^2}{  \lambda \log \big( 1 + (\frac{G} {2(\beta_T + \gamma_T)})^2 \big) } \right).
\end{align*}
\begin{proof}
Let $\tau' \subset [T]$ denote the set of rounds where $\|x_t \|_{V_t^{-1}} \geq \frac{G}{2(\beta_T + \gamma_T)}$ for $t \in \tau'$. Let us define $G_t = \lambda I_d + \sum_{s=1}^{t-1} \mathbb{I} \{ s \in \tau' \} x_s x_s^T$. Observe that,
\begin{align}
&\left(\frac{d\lambda + \abs{\tau'} L^2}{d} \right)^d = \left(\frac{d\lambda + \sum_{s=1}^{T} \mathbb{I} \{s \in \tau'\} L^2}{d} \right)^d \nonumber \\
&\hspace{10pt}\geq  \left(\frac{\Tr(\lambda I_d) + \sum_{s=1}^{T}\mathbb{I} \{s \in \tau'\} \, \Tr(x_s x_s^T)}{d} \right)^d \label{eqn:badevent1} \\ 
&\hspace{10pt}= \left(\frac{\Tr(G_{T+1})}{d} \right)^d \nonumber \\
&\hspace{10pt}\geq \det (G_{T+1}) \nonumber \nonumber \\
&\hspace{10pt}= \det (\lambda I_d) \prod_{s \in \tau'} \left( 1 + \| x_s \|_{G_{s}^{-1}}^2 \right) \nonumber \\ 
&\hspace{10pt}\geq \det (\lambda I_d) \prod_{s \in \tau'} \left( 1 + \| x_s \|_{V_{s}^{-1}}^2 \right) \label{eqn:badevent2} \\ 
&\hspace{10pt}\geq \lambda^d \left( 1 + \bigg(\frac{G} {2(\beta_T + \gamma_T)}\bigg)^2  \right)^{\abs{\tau'}}, \label{eqn:badevent3}
\end{align}
where \eqref{eqn:badevent1} follows from $\lVert x \rVert_2 \leq L,~\forall x \in \mathcal{X}$ (see~Assumption~\ref{a:a2}), \eqref{eqn:badevent2} holds since $\| x_s \|_{G_{s}^{-1}} \geq \| x_s \|_{V_{s}^{-1}}$ for any $x \in \mathcal{X}$, and \eqref{eqn:badevent3} holds since $\|x_s \|_{V_s^{-1}} \geq \frac{G}{2(\beta_T + \gamma_T)}$ for $s \in \tau'$. We then have,
\begin{align*}
    \left(\frac{d\lambda + \abs{\tau'} L^2}{d} \right)^d \geq \lambda^d \left( 1 + \bigg(\frac{G} {2(\beta_T + \gamma_T)} \bigg)^2  \right)^{\abs{\tau'}},
\end{align*}
which is equivalent to,
\begin{align} \label{eqn:badevent4}
\abs{\tau'} \leq \frac{d}{\log \big( 1 + ( \frac{G} {2(\beta_T + \gamma_T)} )^2  \big)} \log \left(1 + \frac{ \abs{\tau'} L^2}{\lambda d} \right) 
\end{align}
Let $a \coloneqq \frac{d}{\log \big( 1 + ( \frac{G} {2(\beta_T + \gamma_T)})^2\big) }$ and $b \coloneqq \frac{L^2}{d\lambda}$, where $a,b > 0$.  \eqref{eqn:badevent4} is then equivalent to,
\begin{align} \label{eqn:badevent5}
\abs{\tau'} \leq a \log \left(1 + b \abs{\tau'} \right)
\end{align}
We know that the following is true for any $a,b > 0$,
\begin{align*}
a \log \Big(1+ b\big(3a \log(1+ab)\big)\Big) &\leq a \log (1+3a^2b^2) \\
&\leq a \log (1+ab)^3 \\
&= 3a\log(1+ab),
\end{align*}
that is,
\begin{align}\label{eqn:badevent6}
3a\log(1+ab) \geq a \log \Big(1+ b\big(3a \log(1+ab)\big)\Big).
\end{align}
Notice the similarity between \eqref{eqn:badevent5} and \eqref{eqn:badevent6}. If we can show that $c - a \log(1+bc)$ is increasing for $c > 3a \log(1+ab)$, then we can conclude $\abs{\tau'} \leq 3a \log(1+ab)$ via \eqref{eqn:badevent5} and \eqref{eqn:badevent6}. Consider the following,
\begin{align*}
\frac{\partial}{\partial c} (c - a \log(1+bc) ) = 1 - \frac {ab}{1+bc}
\end{align*}
Then, $c - a \log(1+bc)$ is increasing when 
\begin{align*}
1- \frac {ab}{1+bc} > 0,~~\text{i.e.,}~~c > a - \frac{1}{b}.
\end{align*}
Since $3a \log(1+ab) >  a - \frac{1}{b}$, we are done. One can confirm that by letting $m \coloneqq ab$ ($m>0$, since $a,b>0$) and observing that $3m \log(1+m) - m  + 1 >  0$ for all $m > 0$. Then, we have the following after plugging in the original values for $a$ and $b$.
\begin{align} \label{eqn:badevent7}
 \abs{\tau'} \leq  \frac{3d}{\log \big( 1 + (\frac{G} {2(\beta_T + \gamma_T)}))^2 \big) } \log \left( 1+ \frac{ L^2}{  \lambda \log \big( 1 + (\frac{G} {2(\beta_T + \gamma_T)})^2 \big) } \right).
\end{align}
Finally, by \eqref{eqn:badevent7}, for any $t > \abs{\tau'}$ and any $x_t \in \mathcal{X}$, we have $\lVert x_t \rVert_{V_{t}^{-1}} < \frac{G}{2(\beta_T + \gamma_T)}$, that is, $\max_{x \in \mathcal{X}} \lVert x \rVert_{V_{t}^{-1}} < \frac{G}{2(\beta_T + \gamma_T)}$, which implies that $\mathbb{I} \{ D_t \}$ = 1 and $\abs{\tau} \leq \abs{\tau'}$, completing the proof.
\end{proof}

\section{Distribution for $\eta_t$} \label{sec:appc}
We characterize the distributional properties of $\eta_t \sim D^{TS}$ that proves useful in bounding the regret. Let us denote the optimal pseudo-action set with respect to $\tilde{\theta}_t$ at round $t$ by $X^*(\tilde{\theta}_t) \subset \mathcal{X}_t$, and recall the definitions of the \textit{optimistic} parameter set and the event $D_t$,
\begin{align} 
\tilde{\Theta}^{opt}_t &= \{\theta \in \mathbb{R}^{d} :\argmin_{x \in \hat{\mathcal{X}}^s_t} \abs{x^T \theta - K} \in X^*(\tilde{\theta}_t) \} \label{eqn:thetatilde_sup} \\
D_t &= \Big\{ \max_{x \in \mathcal{X}} \|x \|_{V_t^{-1}} < \frac{G}{2(\beta_T + \gamma_T)} \Big\}. \label{eqn:new_d_event_sup}
\end{align}
where $G=\min \{(C_2-C_1)/2, \epsilon L\}$. Remember the definition for $D^{TS}$,
\paragraph{Definition 1.}
Let $D^{TS}$ be a multivariate distribution on $\mathbb{R}^{d}$ absolutely continuous with respect to the Lebesgue measure with the following properties ($\eta_t \sim D^{TS}$):

1. There exists a strictly positive probability $p$ such that the following holds for any $\tilde{x} \in X^*(\tilde{\theta}_t)$ and any $u \in \mathbb{R}^{d}$ with $\lVert u \rVert_{2} = 1$,
\begin{align} \label{eqn:ntdef1_sup} 
\mathbb{P} \left\lbrace \frac{C_1 - K}{\beta_t \lVert\tilde{x} \rVert_{V_t^{-1}}} + 1 \leq u^T \eta_t \leq \frac{C_2 - K}{\beta_t \lVert \tilde{x} \rVert_{V_t^{-1}}} - 1 \: \: \middle| D_t, E_t \right\rbrace > p.
\end{align}

2. There exists $c,c'~\mathbb{R}^+$ such that for any $\delta \in (0,1)$, the following \textit{concentration} property holds,
\begin{align} \label{eqn:ntdef2_sup}
\mathbb{P} \{ \lVert \eta_t \rVert_2 \leq \sqrt{c d \log(\frac{c'd}{\delta} ) } \} > 1 - \delta.
\end{align}
The \textit{concentration} inequality in \eqref{eqn:ntdef2_sup} is satisfied by $\eta_t \sim \mathcal{N} (0, \sigma^2 I_d)$. Observe that,
\begin{align}\label{eqn:ntdef22_sup}
\mathbb{P} \{ \lVert \eta \rVert_2 \leq \alpha \sqrt{d} \} \geq 1 - d \mathbb{P} \{ \abs{\eta_i} > \alpha \} \geq 1 - d 2 e^{\frac{-\alpha^2}{2 \sigma^2}}.
\end{align}
Letting $\alpha = \sqrt{c \log(\frac{c'd}{\delta})}$, $c=2 \sigma^2,~c'=2$, we have $\alpha = \sqrt{2 \sigma^2 \log (\frac{2d}{\delta})}$. \eqref{eqn:ntdef22_sup} then transforms to $\mathbb{P} \{ \lVert \eta \rVert_2 \leq \sqrt{c d \log(\frac{c'd}{\delta} ) } \} \geq 1- \delta$.

We define the following proxy event for brevity,
\begin{align*}
A_t \coloneqq \left\lbrace \frac{C_1 - K}{\beta_t \lVert\tilde{x} \rVert_{V_t^{-1}}} + 1 \leq u^T \eta_t \leq \frac{C_2 - K}{\beta_t \lVert \tilde{x} \rVert_{V_t^{-1}}} - 1 \right\rbrace
\end{align*}

For \eqref{eqn:ntdef1_sup} to be valid, the length of the interval $\left[ \frac{C_1 - K}{\beta_t \lVert\tilde{x} \rVert_{V_t^{-1}}} + 1, \frac{C_2 - K}{\beta_t \lVert \tilde{x} \rVert_{V_t^{-1}}} - 1 \right]$ must be lower bounded by a constant greater than zero.  Consider the following,
\begin{align}
&2 < \left( \frac{C_2 - K}{\beta_t \lVert \tilde{x} \rVert_{V_t^{-1}}} - 1 \right) - \left( \frac{C_1 - K}{\beta_t \lVert \tilde{x} \rVert_{V_t^{-1}}} + 1 \right) \label{eqn:auxdd_0} \\
&\hspace{-15pt}\iff \frac{C_1 - K}{\beta_t \lVert \tilde{x} \rVert_{V_t^{-1}}} + 2 < \frac{C_2 - K}{\beta_t \lVert \tilde{x} \rVert_{V_t^{-1}}} - 2 \label{eqn:auxdd_0} \\
&\hspace{-15pt}\iff~~~ 4 \beta_t \lVert \tilde{x} \rVert_{V_t^{-1}} < C_2 - C_1. \label{eqn:auxdd_1}
\end{align}
\eqref{eqn:auxdd_1} holds under the event $D_t$ since,
\begin{align}
4 \beta_t \lVert \tilde{x} \rVert_{V_t^{-1}} &< 4 \beta_t \max_{x \in \mathcal{X}}\lVert x \rVert_{V_t^{-1}} \nonumber \\
&< 4 \beta_t \frac{G}{2(\beta_T + \gamma_T)} \label{eqn:auxdd_2} \\
&\leq 4 \beta_t \frac{C_2-C_1}{4(\beta_T + \gamma_T)} \label{eqn:auxdd_3} \\
&< C_2 - C_1 \label{eqn:auxdddd}
\end{align}
where we have \eqref{eqn:auxdd_2} when the event $D_t$ holds, \eqref{eqn:auxdd_3} since $G = \min \{(C_2-C_1)/2, \epsilon L\}$, and \eqref{eqn:auxdddd} since $(\beta_t)_t$ is an increasing sequence, and $\gamma_t > 0$ for all $t$.  Since $u^T \eta_t \sim \mathcal{N} (0, \sigma^2)$, we have $\mathbb{P} \{ A_t \mid D_t\} > 2p$ for some $2p \in (0,1)$. Notice that in \eqref{eqn:ntdef1_sup}, the $A_t$ event is also conditioned on the $E_t$ event together with $D_t$. We then have,
\begin{align}
\mathbb{P} \{A_t \mid D_t, E_t\} &= \frac{\mathbb{P} \{A_t, E_t \mid D_t \}}{\mathbb{P} \{ E_t \mid D_t\}} \nonumber \\
&> \mathbb{P} \{A_t, E_t \mid D_t \} \nonumber \\ 
&\geq \mathbb{P} \{A_t \mid D_t\} - \mathbb{P} \{ \neg E_t \mid D_t \} \nonumber \\
&=\mathbb{P} \{A_t \mid D_t\} - \mathbb{P} \{ \neg E_t \} \nonumber \\
&>2p - 2\delta' \label{eqn:auxdnew}
\end{align}
where $\delta' = \delta/(4T)$, and since $ \mathbb{P} \{ E_t \} =  \mathbb{P} \{ \hat{E}_t \cap \tilde{E}_t \} \geq 1-2\delta'$ . When $T>\delta/(2p)$, we have $\mathbb{P} \{A_t \mid D_t, E_t\} > p$ by \eqref{eqn:auxdnew}, which confirms \eqref{eqn:ntdef1_sup}. 

When the events $D_t$ and $E_t$ hold, we have the following for any $x \in \mathcal{X}$,
\begin{align}
\abs{x^T (\theta_* - \tilde{\theta}_t)} &\leq \lVert x \rVert_{V_t^{-1}} \lVert \theta_* - \tilde{\theta}_t \rVert_{V_t} \label{eqn:auxdd_6} \\
&\leq \lVert x \rVert_{V_t^{-1}} \big(\lVert \theta_* - \hat{\theta}_t \rVert_{V_t} +  \lVert \hat{\theta}_t - \tilde{\theta}_t \rVert_{V_t}\big) \nonumber \\
&\leq \frac{G}{2(\beta_T + \gamma_T)} (\beta_t + \gamma_t) \label{eqn:auxdd_7} \\
&\leq \frac{G}{\beta_T + \gamma_T} (\beta_t + \gamma_t)  \\
&\leq \frac{\epsilon L}{\beta_T + \gamma_T} (\beta_t + \gamma_t), \label{eqn:auxdd_8} \\
&\leq \epsilon L \label{eqn:auxdd_9}
\end{align}
where \eqref{eqn:auxdd_6} holds by Cauchy-Schwarz inequality, \eqref{eqn:auxdd_7} follows since the events $D_t$ and $E_t$ hold, \eqref{eqn:auxdd_8} holds since $G = \min \{(C_2-C_1)/2, \epsilon L\}$, and \eqref{eqn:auxdd_9} holds since since $(\beta_t)_t$ and $(\gamma_t)_t$ are increasing sequences.
Then, since we have $\lVert x \rVert_{2} \leq L$ by Assumption~\ref{a:a2}, we can state the following,
\begin{align}
&\max_{x \in \mathcal{X}} \abs{x^T (\theta_* - \tilde{\theta}_t)} = L \lVert \theta_* - \tilde{\theta}_t \rVert_{2} \leq \epsilon L \nonumber \\
&\hspace{-15pt}\implies~~~ \lVert \theta_* - \tilde{\theta}_t \rVert_{2} \leq \epsilon. \label{eqn:auxdd_10}
\end{align}
Then, by Assumption~\ref{a:a3} and \eqref{eqn:auxdd_10}, we can state that there exists $x \in \mathcal{X}_t$ such that $x^T \tilde{\theta}_t = K$. Then, for all $\tilde{x} \in X^*(\tilde{\theta}_t)$, we have $\tilde{x}^T \tilde{\theta}_t = K$.  Next, let $u^T = \frac{-\beta_t \tilde{x}^T V_t^{-\frac{1}{2}}}{\beta_t \norm{\tilde{x}}_{V_t^{-1}}}$. Then, by \eqref{eqn:ntdef1_sup}, we can state the following with at least $p$ probability,
\begin{align} 
K - \beta_t \tilde{x}^T V_t^{-\frac{1}{2}} \eta_t - \beta_t \lVert\tilde{x} \rVert_{V_t^{-1}} &\geq C_1 \label{eqn:auxdd_4} \\
K - \beta_t\tilde{x}^T V_t^{-\frac{1}{2}} \eta_t + \beta_t \lVert \tilde{x} \rVert_{V_t^{-1}} &\leq C_2. \label{eqn:auxdd_5}
\end{align}
We can then replace $K$ with $\tilde{x}^T \tilde{\theta}_t$ for any $\tilde{x} \in X^*(\tilde{\theta}_t)$, and $\tilde{\theta}_t$ with $\hat{\theta}_t + \beta_t V_t^{-\frac{1}{2}} \eta_t$ in \eqref{eqn:auxdd_4} and \eqref{eqn:auxdd_5} to obtain the following,
\begin{align}
\tilde{x}^T \hat{\theta}_t - \beta_t \lVert \tilde{x} \rVert_{V_t^{-1}} &\geq C_1 \label{eqn:ntaux1_sup} \\
\tilde{x}^T \hat{\theta}_t + \beta_t \lVert \tilde{x} \rVert_{V_t^{-1}} &\leq C_2 \label{eqn:ntaux2_sup}. 
\end{align}
Note that \eqref{eqn:ntaux1_sup} and \eqref{eqn:ntaux2_sup} imply that $\tilde{x} \in \hat{\mathcal{X}}^s_t$ for all $\tilde{x} \in X^*(\tilde{\theta}_t)$ from \eqref{eqn:proxysafealt}. That is, by satisfying the distributional property in \eqref{eqn:ntdef1_sup}, we can assure that $\mathbb{P} \{ \tilde{\theta}_t \in \tilde{\Theta}^{opt}_t | D_t, E_t\} > p$, and $\tilde{\theta}_t \in \tilde{\Theta}^{opt}_t$ implies that $x_t^T \tilde{\theta}_t=K$ when $D_t$ and $E_t$ hold. Finally, since the event $E_t$ implies that $\tilde{\theta}_t \in \mathcal{E}_t$, we have $\mathbb{P} \{ \tilde{\theta}_t \in \tilde{\Theta}^{opt}_t \cap \mathcal{E}_t | D_t, E_t\} > p$. 

\small
\bibliography{references}

\begin{thebibliography}{40}
\providecommand{\natexlab}[1]{#1}
\providecommand{\url}[1]{\texttt{#1}}
\expandafter\ifx\csname urlstyle\endcsname\relax
  \providecommand{\doi}[1]{doi: #1}\else
  \providecommand{\doi}{doi: \begingroup \urlstyle{rm}\Url}\fi

\bibitem[Abbasi-Yadkori et~al.(2011)Abbasi-Yadkori, P{\'a}l, and
  Szepesv{\'a}ri]{abbasi2011improved}
Y.~Abbasi-Yadkori, D.~P{\'a}l, and C.~Szepesv{\'a}ri.
\newblock Improved algorithms for linear stochastic bandits.
\newblock \emph{Advances in Neural Information Processing Systems},
  24:\penalty0 2312--2320, 2011.

\bibitem[Abeille and Lazaric(2017)]{abeille2017linear}
M.~Abeille and A.~Lazaric.
\newblock {Linear Thompson Sampling Revisited}.
\newblock In \emph{Proceedings of the 20th International Conference on
  Artificial Intelligence and Statistics}, pages 176--184, 2017.

\bibitem[Agrawal and Goyal(2012)]{agrawal2012analysis}
S.~Agrawal and N.~Goyal.
\newblock Analysis of thompson sampling for the multi-armed bandit problem.
\newblock In \emph{Conference on Learning Theory}, pages 39--1, 2012.

\bibitem[Agrawal and Goyal(2013)]{agrawal2013thompson}
S.~Agrawal and N.~Goyal.
\newblock Thompson sampling for contextual bandits with linear payoffs.
\newblock In \emph{International Conference on Machine Learning}, pages
  127--135, 2013.

\bibitem[Amani(2019)]{amani2019linear}
S.~Amani.
\newblock Linear stochastic bandits under safety constraints.
\newblock \emph{Advances in Neural Information Processing Systems}, 2019.

\bibitem[Auer et~al.(2002)Auer, Cesa-Bianchi, and Fischer]{auer2002finite}
P.~Auer, N.~Cesa-Bianchi, and P.~Fischer.
\newblock Finite-time analysis of the multiarmed bandit problem.
\newblock \emph{Machine Learning}, 47\penalty0 (2):\penalty0 235--256, 2002.

\bibitem[Aziz et~al.(2021)Aziz, Kaufmann, and Riviere]{aziz2021multi}
M.~Aziz, E.~Kaufmann, and M.-K. Riviere.
\newblock On multi-armed bandit designs for dose-finding clinical trials.
\newblock \emph{Journal of Machine Learning Research}, 22:\penalty0 1--38,
  2021.

\bibitem[Bastaki et~al.(2005)]{bastaki2005diabetes}
A.~Bastaki et~al.
\newblock Diabetes mellitus and its treatment.
\newblock \emph{International Journal of Diabetes and Metabolism}, 13\penalty0
  (3):\penalty0 111, 2005.

\bibitem[Br{\'e}g{\`e}re et~al.(2019)Br{\'e}g{\`e}re, Gaillard, Goude, and
  Stoltz]{bregere2019target}
M.~Br{\'e}g{\`e}re, P.~Gaillard, Y.~Goude, and G.~Stoltz.
\newblock Target tracking for contextual bandits: Application to demand side
  management.
\newblock In \emph{International Conference on Machine Learning}, pages
  754--763, 2019.

\bibitem[Bubeck and Cesa-Bianchi(2012)]{bubeck2012regret}
S.~Bubeck and N.~Cesa-Bianchi.
\newblock Regret analysis of stochastic and nonstochastic multi-armed bandit
  problems.
\newblock \emph{Foundations and Trends in Machine Learning}, 5\penalty0
  (1):\penalty0 1--122, 2012.

\bibitem[Chen et~al.(2020)Chen, Pierson, Rose, Joshi, Ferryman, and
  Ghassemi]{chen2020ethical}
I.~Y. Chen, E.~Pierson, S.~Rose, S.~Joshi, K.~Ferryman, and M.~Ghassemi.
\newblock Ethical machine learning in healthcare.
\newblock \emph{Annual Review of Biomedical Data Science}, 4, 2020.

\bibitem[Chu et~al.(2011)Chu, Li, Reyzin, and Schapire]{chu2011contextual}
W.~Chu, L.~Li, L.~Reyzin, and R.~Schapire.
\newblock Contextual bandits with linear payoff functions.
\newblock In \emph{Proceedings of the Fourteenth International Conference on
  Artificial Intelligence and Statistics}, pages 208--214, 2011.

\bibitem[Dani et~al.(2008)Dani, Hayes, and Kakade]{dani2008stochastic}
V.~Dani, T.~P. Hayes, and S.~M. Kakade.
\newblock Stochastic linear optimization under bandit feedback.
\newblock 2008.

\bibitem[Daskalaki et~al.(2013)Daskalaki, Diem, and
  Mougiakakou]{daskalaki2013personalized}
E.~Daskalaki, P.~Diem, and S.~G. Mougiakakou.
\newblock Personalized tuning of a reinforcement learning control algorithm for
  glucose regulation.
\newblock In \emph{35th Annual International Conference of the IEEE Engineering
  in Medicine and Biology Society (EMBC)}, pages 3487--3490, 2013.

\bibitem[Khezeli and Bitar(2020)]{khezeli2020safe}
K.~Khezeli and E.~Bitar.
\newblock Safe linear stochastic bandits.
\newblock In \emph{Proceedings of the AAAI Conference on Artificial
  Intelligence}, volume~34, pages 10202--10209, 2020.

\bibitem[Kovatchev et~al.(2003)Kovatchev, Cox, Kumar, Gonder-Frederick, and
  Clarke]{kovatchev2003algorithmic}
B.~P. Kovatchev, D.~J. Cox, A.~Kumar, L.~Gonder-Frederick, and W.~L. Clarke.
\newblock Algorithmic evaluation of metabolic control and risk of severe
  hypoglycemia in type 1 and type 2 diabetes using self-monitoring blood
  glucose data.
\newblock \emph{Diabetes Technology \& Therapeutics}, 5\penalty0 (5):\penalty0
  817--828, 2003.

\bibitem[Kovatchev et~al.(2009)Kovatchev, Breton, Dalla~Man, and
  Cobelli]{kovatchev2009silico}
B.~P. Kovatchev, M.~Breton, C.~Dalla~Man, and C.~Cobelli.
\newblock In silico preclinical trials: a proof of concept in closed-loop
  control of type 1 diabetes, 2009.

\bibitem[Lee et~al.(2020)Lee, Shen, Jordon, and Schaar]{lee2020contextual}
H.-S. Lee, C.~Shen, J.~Jordon, and M.~Schaar.
\newblock Contextual constrained learning for dose-finding clinical trials.
\newblock In \emph{The 23rd International Conference on Artificial Intelligence
  and Statistics}, pages 2645--2654, 2020.

\bibitem[Materson et~al.(1993)Materson, Reda, Cushman, Massie, Freis, Kochar,
  Hamburger, Fye, Lakshman, Gottdiener, et~al.]{materson1993single}
B.~J. Materson, D.~J. Reda, W.~C. Cushman, B.~M. Massie, E.~D. Freis, M.~S.
  Kochar, R.~J. Hamburger, C.~Fye, R.~Lakshman, J.~Gottdiener, et~al.
\newblock Single-drug therapy for hypertension in men--a comparison of six
  antihypertensive agents with placebo.
\newblock \emph{New England Journal of Medicine}, 328\penalty0 (13):\penalty0
  914--921, 1993.

\bibitem[Moradipari et~al.(2021)Moradipari, Amani, Alizadeh, and
  Thrampoulidis]{moradipari2021safe}
A.~Moradipari, S.~Amani, M.~Alizadeh, and C.~Thrampoulidis.
\newblock Safe linear thompson sampling with side information.
\newblock \emph{IEEE Transactions on Signal Processing}, 2021.

\bibitem[Nerenberg et~al.(2018)Nerenberg, Zarnke, Leung, Dasgupta, Butalia,
  McBrien, Harris, Nakhla, Cloutier, Gelfer, et~al.]{nerenberg2018hypertension}
K.~A. Nerenberg, K.~B. Zarnke, A.~A. Leung, K.~Dasgupta, S.~Butalia,
  K.~McBrien, K.~C. Harris, M.~Nakhla, L.~Cloutier, M.~Gelfer, et~al.
\newblock Hypertension {Canada’s} 2018 guidelines for diagnosis, risk
  assessment, prevention, and treatment of hypertension in adults and children.
\newblock \emph{Canadian Journal of Cardiology}, 34\penalty0 (5):\penalty0
  506--525, 2018.

\bibitem[Petroni et~al.(2017)Petroni, Wages, Paux, and
  Dubois]{petroni2017implementation}
G.~R. Petroni, N.~A. Wages, G.~Paux, and F.~Dubois.
\newblock Implementation of adaptive methods in early-phase clinical trials.
\newblock \emph{Statistics in medicine}, 36\penalty0 (2):\penalty0 215--224,
  2017.

\bibitem[Price and Nicholson(2017)]{price2017regulating}
W.~Price and I.~Nicholson.
\newblock Regulating black-box medicine.
\newblock \emph{Mich. L. Rev.}, 116:\penalty0 421, 2017.

\bibitem[Rusmevichientong and Tsitsiklis(2010)]{rusmevichientong2010linearly}
P.~Rusmevichientong and J.~N. Tsitsiklis.
\newblock Linearly parameterized bandits.
\newblock \emph{Mathematics of Operations Research}, 35\penalty0 (2):\penalty0
  395--411, 2010.

\bibitem[Russo and Van~Roy(2014)]{russo2014learning}
D.~Russo and B.~Van~Roy.
\newblock Learning to optimize via posterior sampling.
\newblock \emph{Mathematics of Operations Research}, 39\penalty0 (4):\penalty0
  1221--1243, 2014.

\bibitem[Shen et~al.(2020)Shen, Wang, Villar, and Van
  Der~Schaar]{shen2020learning}
C.~Shen, Z.~Wang, S.~Villar, and M.~Van Der~Schaar.
\newblock Learning for dose allocation in adaptive clinical trials with safety
  constraints.
\newblock In \emph{Proceedings of the 37th International Conference on Machine
  Learning}, pages 8730--8740, 2020.

\bibitem[Srinivas et~al.(2010)Srinivas, Krause, Kakade, and
  Seeger]{srinivas2010gaussian}
N.~Srinivas, A.~Krause, S.~Kakade, and M.~Seeger.
\newblock Gaussian process optimization in the bandit setting: no regret and
  experimental design.
\newblock In \emph{Proceedings of the 27th International Conference on
  International Conference on Machine Learning}, pages 1015--1022, 2010.

\bibitem[Sui et~al.(2015)Sui, Gotovos, Burdick, and Krause]{sui2015safe}
Y.~Sui, A.~Gotovos, J.~Burdick, and A.~Krause.
\newblock Safe exploration for optimization with {Gaussian} processes.
\newblock In \emph{Proceedings of the 32nd International Conference on Machine
  Learning}, pages 997--1005, 2015.

\bibitem[Sui et~al.(2018)Sui, vincent Zhuang, Burdick, and
  Yue]{sui2018stagewise}
Y.~Sui, vincent Zhuang, J.~Burdick, and Y.~Yue.
\newblock Stagewise safe {Bayesian} optimization with {Gaussian} processes.
\newblock In \emph{Proceedings of the 35th International Conference on Machine
  Learning}, pages 4781--4789, 2018.

\bibitem[Sutton et~al.(1998)Sutton, Barto, et~al.]{sutton1998introduction}
R.~S. Sutton, A.~G. Barto, et~al.
\newblock \emph{Introduction to reinforcement learning}, volume 135.
\newblock 1998.

\bibitem[Tejedor et~al.(2020)Tejedor, Woldaregay, and
  Godtliebsen]{tejedor2020reinforcement}
M.~Tejedor, A.~Z. Woldaregay, and F.~Godtliebsen.
\newblock Reinforcement learning application in diabetes blood glucose control:
  A systematic review.
\newblock \emph{Artificial Intelligence in Medicine}, 104:\penalty0 101836,
  2020.

\bibitem[Thompson(1933)]{thompson1933likelihood}
W.~R. Thompson.
\newblock On the likelihood that one unknown probability exceeds another in
  view of the evidence of two samples.
\newblock \emph{Biometrika}, 25\penalty0 (3/4):\penalty0 285--294, 1933.

\bibitem[Turchetta et~al.(2019)Turchetta, Berkenkamp, and
  Krause]{turchetta2019safe}
M.~Turchetta, F.~Berkenkamp, and A.~Krause.
\newblock Safe exploration for interactive machine learning.
\newblock In \emph{Advances in Neural Information Processing Systems}, 2019.

\bibitem[Usmanova et~al.(2019)Usmanova, Krause, and
  Kamgarpour]{usmanova2019safe}
I.~Usmanova, A.~Krause, and M.~Kamgarpour.
\newblock Safe convex learning under uncertain constraints.
\newblock In \emph{The 22nd International Conference on Artificial Intelligence
  and Statistics}, pages 2106--2114, 2019.

\bibitem[Vayena et~al.(2018)Vayena, Blasimme, and Cohen]{vayena2018machine}
E.~Vayena, A.~Blasimme, and I.~G. Cohen.
\newblock Machine learning in medicine: addressing ethical challenges.
\newblock \emph{PLoS medicine}, 15\penalty0 (11):\penalty0 e1002689, 2018.

\bibitem[Villar et~al.(2015)Villar, Bowden, and Wason]{villar2015multi}
S.~S. Villar, J.~Bowden, and J.~Wason.
\newblock Multi-armed bandit models for the optimal design of clinical trials:
  benefits and challenges.
\newblock \emph{Statistical science: a review journal of the Institute of
  Mathematical Statistics}, 30\penalty0 (2):\penalty0 199, 2015.

\bibitem[Walsh et~al.(2011)Walsh, Roberts, and Bailey]{walsh2011guidelines}
J.~Walsh, R.~Roberts, and T.~Bailey.
\newblock Guidelines for optimal bolus calculator settings in adults.
\newblock \emph{Journal of Diabetes Science and Technology}, 5\penalty0
  (1):\penalty0 129--135, 2011.

\bibitem[Xie(2018)]{simglucose2018}
J.~Xie.
\newblock Simglucose v0.2.1.
\newblock \url{https://github.com/jxx123/simglucose}, 2018.

\bibitem[Zhu et~al.(2020{\natexlab{a}})Zhu, Li, Herrero, and
  Georgiou]{zhu2020basal}
T.~Zhu, K.~Li, P.~Herrero, and P.~Georgiou.
\newblock Basal glucose control in type 1 diabetes using deep reinforcement
  learning: An in silico validation.
\newblock \emph{IEEE Journal of Biomedical and Health Informatics},
  2020{\natexlab{a}}.

\bibitem[Zhu et~al.(2020{\natexlab{b}})Zhu, Li, Kuang, Herrero, and
  Georgiou]{zhu2020insulin}
T.~Zhu, K.~Li, L.~Kuang, P.~Herrero, and P.~Georgiou.
\newblock An insulin bolus advisor for type 1 diabetes using deep reinforcement
  learning.
\newblock \emph{Sensors}, 20\penalty0 (18):\penalty0 5058, 2020{\natexlab{b}}.

\end{thebibliography}
\bibliographystyle{abbrvnat}

\end{document}